\DeclareMathOperator{\argmax}{argmax}
\newtheorem{thm}{Theorem}
\newtheorem{defini}{Definition}
\newtheorem{lemma}{Lemma}
\newtheorem{claim}{Claim}
\title{Regret Analysis of Bandit Problems with Causal Background Knowledge}
\author{ { \bf Yangyi Lu} \\
Department of Statistics \\
University of Michigan\\
{\tt yylu@umich.edu}
\And
{\bf Amirhossein Meisami}  \\
Adobe Inc.          \\
{\tt meisami@adobe.com} \\
\And
{\bf Ambuj Tewari}   \\
Department of Statistics \\
University of Michigan\\
{\tt tewaria@umich.edu}   \\
\And
{\bf Zhenyu Yan} \\
Adobe Inc.\\
{\tt wyan@adobe.com}
}
\begin{document}

\maketitle

\begin{abstract}
 We study how to learn optimal interventions sequentially given causal information represented as a causal graph along with associated conditional distributions.
Causal modeling is useful in real world problems like online advertisement where complex causal mechanisms underlie the relationship between interventions and outcomes.
  We propose two algorithms, causal upper confidence bound (C-UCB) and causal Thompson Sampling (C-TS), that enjoy improved cumulative regret bounds compared with algorithms that do not use causal information.
We thus resolve an open problem posed by~\cite{lattimore2016causal}.
Further, we extend C-UCB and C-TS to the linear bandit setting and propose causal linear UCB (CL-UCB) and causal linear TS (CL-TS) algorithms. These algorithms enjoy a cumulative regret bound that only scales with the feature dimension. 
Our experiments show the benefit of using causal information. For example, we observe that even with a few hundreds of iterations, the regret of causal algorithms is less than that of standard algorithms by a factor of three. We also show that under certain causal structures, our algorithms scale better than the standard bandit algorithms as the number of interventions increases.

\end{abstract}
\section{INTRODUCTION} \label{sec:intro}
In a multi-armed bandit (MAB) problem, an agent adaptively learns to pull arms from a finite set of arms based on the past knowledge. At each pull, it observes a single reward corresponding to the arm pulled and its goal is to maximize the cumulative reward received within a time horizon. Bandit models are widely used in various applications, such as education~\citep{williams2016axis}, clinical trials~\citep{villar2015multi,tewari2017ads} and marketing~\citep{burtini2015improving,mersereau2009structured}.

There are many well-studied stochastic bandit algorithms, such as upper confidence bound (UCB)~\citep{auer2002finite} and Thompson Sampling (TS)~\citep{agrawal2012analysis}, that can both achieve a regret bound $\tilde{O}(\sqrt{KT})$\footnote{$\tilde{O}$ ignores constant and poly-logarithmic factors.}, where $K$ is the number of arms and $T$ is the time horizon. 
However, in many real world applications where we search for good interventions, the number of actions (interventions) is extremely large. 
An intervention here is defined as a forcible change to the value of a set of variables.

As an example of a real world problem with a large space of available interventions, we focus on the email campaign problem. Online advertising companies are constantly looking for an optimal trade-off between exploration and exploitation efforts in order to convert a potential buyer to an actual buyer. 
In case of email campaigns, the overall target is to maximize the user interaction with the emails that could be defined as opening an email, clicking on a link or eventually buying a product. 
To achieve these goals, marketers adjust several variables in the process. 
For instance, they may know that the length of subject, the template, the time of day to send, the product and the type (promotion, online events, etc.) of an email can affect whether a customer who receives the email will click the links inside or not.
Every possible assignment of values to these variables can be an intervention leading to an extremely large number of interventions. Therefore, strategic utilization of such interventions is necessary for maximizing the cumulative user conversion throughout the campaign horizon.

A natural approach to deal with a large number of interventions is to exploit relationships between the way different interventions affect the outcome. 
In this paper, we focus on causal relations among interventions. 
In particular, we use causal graphs~\citep{pearl2000causality} to represent relationships between interacting variables in a complex system.
We study the following problem: using previously acquired knowledge about the causal graph structure, how to quickly learn good interventions sequentially~\citep{sen2017identifying,hyttinen2013experiment}?
Our goal is to optimize over a given set of interventions in a sequential decision making framework where the dependence among reward distribution of these interventions is captured through a causal structure.

\citet{lattimore2016causal} proposed two causal bandit algorithms, but they only provided simple regret guarantees and their bounds scale with the number of interventions in the worst case. 
Indeed, one of the open problems in their paper is to design algorithms that enjoy a $\tilde{O}(\sqrt{T})$ cumulative regret bound, and utilize the causal structure at the same time.
Cumulative regret is appropriate when both exploration and exploitation are needed, while simple regret is useful when it is important to identify a good intervention at the end of a pure exploration phase. In many real world problems, we are not simply looking for the best intervention as quickly as possible without consideration of outcomes obtained during the exploration phase. 
In email campaign or clinical trials problems, a good policy should lead to high revenue and conversions or good health outcomes cumulatively, which are not what a pure exploration method can achieve. 
Therefore we focus on cumulative regret in this paper. 

\subsection{OUR CONTRIBUTIONS}\label{sec:contribution}
We propose two natural and efficient algorithms, causal UCB (C-UCB) and causal TS (C-TS), by incorporating the available causal knowledge in UCB and TS for multi-armed bandit problems. We use causal knowledge to greatly reduce the amount of exploration needed to achieve low cumulative regret. 

Suppose there are $N$ variables that are related to the reward and each of them takes on $k$ distinct values, which means changing the value of any of these variables can affect the reward distribution. 
Note the number of interventions can be as large as $(k+1)^N$, which means that standard bandit algorithms are only guaranteed to achieve $\tilde{O}(\sqrt{(k+1)^NT})$ regret.
Our proposed causal algorithms exploit the causal knowledge to achieve $\tilde{O}(\sqrt{(k+1)^nT})$ regret\footnote{Our regret bounds for confidence bound based algorithms will be frequentist while for Thompson sampling they will be Bayesian.}, where $n$ is the number of variables that have {\em direct} causal effects on the reward.
These bounds suggest that causal UCB and TS algorithms are preferable to standard UCB and TS algorithms when $n\ll N$.

We further extend the causal bandit algorithms to linear bandit setting, that leads to our causal linear UCB (CL-UCB) and causal linear TS (CL-TS) algorithms.
We show that CL-UCB and CL-TS both achieve $\tilde{O}(d\sqrt{T})$ regret, where $d$ is the dimension of the coefficient vector in the linear reward model.

To complement our upper bounds, we also provide a lower bound for standard UCB algorithm. 
For some structured bandit instances with $n<N$, we show a lower bound on the cumulative regret of standard UCB which comes arbitrarily close to $\Omega(\sqrt{(k+1)^NT})$, which is much larger than the upper bounds of our proposed algorithms that utilize causal structures. It demonstrates that a standard MAB algorithm is {\em provably} worse than causal algorithms in certain cases.

Our experiments show the benefit of using causal structure: we observe (see Section~\ref{sec:experiments}, Figure~\ref{fig:Algo_comparison}) that within hundreds of iterations, our causal algorithms are already achieving regret within $1/3$ of the standard algorithms' regret.
In addition, we validate numerically that for certain causal graph structure, C-UCB, C-TS, CL-UCB and CL-TS indeed scale better than standard multi-armed bandit algorithms as the size of intervention set grows.  

\subsection{RELATED WORK}\label{sec:related_work}
Causal bandit problems can be treated as multi-armed bandit problems by simply ignoring the causal structure information and the extra observations. So existing bandit algorithms such as UCB~\citep{auer2002finite} and TS~\citep{agrawal2012analysis} can be applied. 
However, causal information should help us learn about an intervention based on the performance of other interventions, which can accelerate the whole learning process. 

Combinatorial bandits~\citep{cesa2012combinatorial} also deal with an action set that is exponentially large. For example, the action set is usually a subset of the $d$-dimensional binary hypercube. In combinatorial bandits, the goal is to feasibly learn in the large action space by assuming certain structure (e.g., linear) in the reward dependence on actions and the availability of an efficient optimization solver over the action set. However, our emphasis is on reducing the statistical complexity by exploiting the given causal structures.

We build on the work of~\citet{lattimore2016causal}. They studied the problem of identifying the best interventions in a stochastic bandit environment with known causal graph and some conditional probabilities of variables in the graph. 
They proposed two algorithms depending on the type of causal graphs: parallel graph/general graph, and proved two simple regret bounds accordingly. 
Both bounds scale with a measure for causal graph's underlying distribution, which is small if every intervention has similar effect on the reward and can be as large as the number of interventions otherwise.
Moreover, their algorithm for general graph contains as many parameters as the number of interventions, which are hard to tune.
We focus on the cumulative regret and our algorithms are universal for all directed acyclic causal graphs defined in Section~\ref{sec:setup} with no tuning parameters other than that of standard MAB algorithms.

Another work~\citep{sen2017identifying} also considered best intervention identification via importance sampling, and their interventions are soft. 
Instead of forcing a node to take a specific value, soft intervention only changes the conditional distribution of a node given its parent nodes.
However, they also only considered simple regret and their bounds scale with the number of interventions.
\cite{sachidananda2017} studied the most closest setting as our paper. 
They showed the effectiveness of their causal Thompson Sampling method, but did not provide any regret analysis. \citet{lee2018structural} empirically showed that a brute-force way to apply standard bandit algorithms on all interventions can suffer huge regret.
Therefore they proposed a way to carefully choose an intervention subset by observing the causal graph structures. Our lower bound (Theorem~\ref{thm:UCB_lower}) provides a theoretical explanation for the phenomenon they observe, namely that brute-force algorithms that try all possible interventions can incur huge regret.


\section{PROBLEM SETUP} \label{sec:setup}

We follow standard terminology and notation~\citep{koller2009probabilistic} to state the causal bandit problem introduced by~\citet{lattimore2016causal}.
A directed acyclic graph $\mathcal{G}$ is used to model the causal structure over a set of random variables $\mathcal{X} = \{X_1,\ldots,X_N\}$. Let $P$ denote the joint distribution over $\mathcal{X}$ that factorizes over $\mathcal{G}$. 
For simplicity, we assume each variable can take on $k$ distinct values, but extending our algorithm to various $k$ values for different variables poses no difficulty. The parents of a variable $X_i$, denoted by $\text{Pa}_{X_i}$, is the set of all variables $X_j$ such that there is an edge from $X_j$ to $X_i$ in graph $\mathcal{G}$. 
A size $m$ intervention (action) is denoted by $\text{do}(\mathbf{X} = \mathbf{x})$, which assigns the values $\mathbf{x} = \{x_1,\ldots,x_m\}$ to the corresponding variables $\mathbf{X} = \{X_1,\ldots,X_m\} \subset \mathcal{X}$. 
An empty intervention is $\text{do}()$. 
The intervention on $\mathbf{X}$ also removes all edges from $\text{Pa}_{X_i}$ to $X_i$ for each $X_i \in \mathbf{X}$. 
Thus the resulting underlying probability distribution that defines the graph is denoted by $P(\mathbf{X}^c|\text{do}(\mathbf{X}=\mathbf{x}))$ over $\mathbf{X}^c:= \mathcal{X}\setminus\mathbf{X}$.

In this causal bandit problem, the reward variable $Y$ is real-valued. 
A learner is given the causal model's graph $\mathcal{G}$\footnote{Even though our algorithms take $\mathcal{G}$ as input, the only information used is the identity of $\text{Pa}_Y$ variables.}, a set of interventions (actions) $\mathcal{A}$ and conditional distributions of parent variables of $Y$ given an intervention $a \in \mathcal{A}$: $P(\text{Pa}_Y|a)$. 
We denote the expected reward for action $a = \text{do}(\mathbf{X}=\mathbf{x})$ and the optimal action $a^*$ by:
\begin{align*}
	\mu_a&:= \mathbb{E}\left[Y|\text{do}(\mathbf{X}=\mathbf{x})\right]\\
	a^* &:= \argmax_{a\in \mathcal{A}}\mu_a.
\end{align*}
We assume $\mu_a \in [0,1]$ for every $a \in \mathcal{A}$. In round $t$, the learner pulls $a_t = \text{do}(\mathbf{X}_t=\mathbf{x}_t)$ based on previous round knowledge and causal information, then observes the reward $Y_t$ and the values of $\text{Pa}_Y$, denoted by $\mathbf{Z}_{(t)} = \{z_1(t),\ldots,z_n(t)\}$, where $n$ is the number of reward's parent variables. 
However, in the work of \citet{lattimore2016causal}, they need to observe the values of all variables after taking an action. Thus, comparing to them, the problem we face is more challenging.
We know there are $k^n$ different value assignments on $\text{Pa}_Y$, for convenience, we denote them by $\mathbf{Z}_1,\ldots,\mathbf{Z}_{k^n}$, where each $\mathbf{Z}_i$ is a vector of length $n$.

The objective of the learner is to minimize the expected cumulative regret $\mathbb{E}\left[R_T\right] = T\mu^* - \sum_{t=1}^{T}\mathbb{E}\left[\mu_{a_t}\right]$ using causal knowledge. 

\textbf{Bayesian Regret:} Let $\omega \in \Omega$ denote the entire parameters of the distribution of $Y|_{\text{Pa}_Y=\mathbf{Z}}$. Reward can be expressed by $Y = \mathbb{E}\left[Y|_{\text{Pa}_Y=\mathbf{Z}}\right]+\epsilon$, where $\epsilon$ is a 1-subgaussian error variable. Thus, the cumulative regret $R_T$ for a given $\omega$ can be formally written as $R_T(\omega)$. We particularly focus on the case where $\omega$ is random with distribution $Q$ and bound the following Bayesian regret:
	$BR_T = \mathbb{E}_{\omega \sim Q} \mathbb{E}_\epsilon R_T(\omega)$.

\textbf{Worst Case Regret:} Using same notations as above, the worst case (frequentist) regret is defined as:
   $ \max_{\omega\in\Omega}\mathbb{E}_\epsilon R_T(\omega)$.
$\mathbb{E}R_T$ is used to represent the worst case regret from now for short.

\section{CAUSAL BANDIT ALGORITHMS} \label{sec:algo}
In this section we propose and analyze algorithms for achieving minimal regret when causal information is known. We generalize standard UCB and standard TS algorithms to their causal counterparts in a natural way. We show how the regret bounds of the causal versions scale with a factor that can be much smaller than what would be the case for the standard algorithms. We also extend linear bandit algorithms to their causal version and demonstrate how it further helps us reduce the cumulative regret.

\subsection{CAUSAL MAB ALGORITHMS}
In the first part of this section we consider causal MAB problem and present causal upper confidence bound algorithm (C-UCB) and causal Thompson Sampling algorithm (C-TS).
\subsubsection{Causal UCB (C-UCB)}
Without causal knowledge, UCB algorithm updates the confidence interval of the reward mean for each arm. At every round, the learner chooses the arm with the highest upper confidence bound value. However, thanks to causal graph structures, we are able to make use of the expectation decomposition formula
\begin{align*}
	\mu_a = \sum_{j=1}^{k^n}\mathbb{E}\left[Y|\text{Pa}_Y=\mathbf{Z}_j\right]P(\text{Pa}_Y=\mathbf{Z}_j|a).
\end{align*}
At every round $t$, Algorithm~\ref{algo:UCB_knownGraph} only updates the reward mean and upper confidence bound for every possible value assignment on reward's parent variables denoted by $\text{UCB}_{\mathbf{Z}_j}(t-1)$ as $P(\text{Pa}_Y=\mathbf{Z}_j|a)$ terms are known. It provides the upper confidence bound for each arm by:
\begin{align*}
	\textbf{UCB}_a(t-1) = \sum_{j=1}^{k^n}\text{UCB}_{\mathbf{Z}_j}(t-1)P(\text{Pa}_Y=\mathbf{Z}_j|a).
\end{align*}
We pull $a_t$ that can maximize $\textbf{UCB}_a(t-1)$ over all $a\in\mathcal{A}$.
There remain fewer upper confidence bounds to construct since usually $k^n<(k+1)^N$, so it is reasonable to expect that the cumulative regret can be reduced. 

\begin{algorithm}[tb]
	\caption{C-UCB}
	\label{algo:UCB_knownGraph}
	\begin{algorithmic}
		\STATE \textbf{Input:} Horizon $T$, action set $\mathcal{A}$, $\delta$, causal graph $\mathcal{G}$, number of parent variables $n$, number of values each parent variable can take on: $k$. 
		\STATE \textbf{Initialization:}
		Values assignment to parent variables: $\mathbf{Z}_j$, $\hat{\mu}_{\mathbf{Z}_j}(0)=0$, $T_{\mathbf{Z}_j}(0)=0$, for $j=1,\ldots,k^n$.
		\FOR{$t=1,\ldots,T$}
		\FOR{$j=1,\ldots,k^n$}
		\STATE $\text{UCB}_{\mathbf{Z}_j}(t-1) = \hat{\mu}_{\mathbf{Z}_j}(t-1)+\sqrt{\frac{2\log(1/\delta)}{1\vee T_{\mathbf{Z}_j}(t-1)}}$.
		\ENDFOR
		\STATE $a_t = \argmax_{a\in\mathcal{A}} \sum_{j=1}^{k^n}\text{UCB}_{\mathbf{Z}_j}(t-1)P(\text{Pa}_Y=\mathbf{Z}_j|a)$
		\STATE Pull arm $a_t$ and observe reward $Y_t$ and its parent nodes' values $\mathbf{Z}_{(t)}$.
		\STATE Update $T_{\mathbf{Z}_j}(t) = \sum_{s=1}^{t} \mathbb{1}_{\{\mathbf{Z}_{(s)}=\mathbf{Z}_j\}}$ and $\hat{\mu}_{\mathbf{Z}_j}(t) = \frac{1}{T_{\mathbf{Z}_j}(t)}\sum_{s=1}^{t}Y_s\mathbb{1}_{\{\mathbf{Z}_{(s)}=\mathbf{Z}_j\}}$, for $j=1,\ldots,k^n$.
		\ENDFOR
	\end{algorithmic}
\end{algorithm}

\begin{thm}[Regret Bound for C-UCB] \label{thm:C-UCB}
	Let $Y|_{\text{Pa}_Y=\mathbf{Z}_j} = \mathbb{E}\left[Y|\text{Pa}_Y=\mathbf{Z}_j\right]+\epsilon$, for $j=1,\ldots,k^n$, where $\epsilon$ is a mean zero, $1$-subgaussian distributed random error. If $\delta=1/T^2$, the regret of policy defined in Algorithm~\ref{algo:UCB_knownGraph} is bounded by
	\begin{align*}
	\mathbb{E}\left[R_T\right] = \tilde{O}\left(\sqrt{k^nT}\right).
	\end{align*}
\end{thm}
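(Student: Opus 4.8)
The plan is to run the standard optimism-based UCB analysis, but carried out at the level of the $k^n$ parent configurations $\mathbf{Z}_1,\dots,\mathbf{Z}_{k^n}$ rather than at the level of the (possibly exponentially many) arms, and then to pay a single Cauchy--Schwarz price that converts the $k^n$ per-configuration confidence widths into the claimed $\sqrt{k^nT}$ rate. Throughout, write $\mu_{\mathbf{Z}_j}:=\mathbb{E}[Y\mid\text{Pa}_Y=\mathbf{Z}_j]$, so that $\mu_a=\sum_{j}\mu_{\mathbf{Z}_j}P(\text{Pa}_Y=\mathbf{Z}_j\mid a)$ and the known weights $P(\text{Pa}_Y=\mathbf{Z}_j\mid a)$ form, for each fixed $a$, a probability vector over $j$.

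First I would define the clean event $G$ on which every confidence bound is valid, i.e. $|\hat{\mu}_{\mathbf{Z}_j}(t)-\mu_{\mathbf{Z}_j}|\le\sqrt{2\log(1/\delta)/(1\vee T_{\mathbf{Z}_j}(t))}$ simultaneously for all $j$ and all $t$. Conditioned on observing configuration $\mathbf{Z}_j$, the reward equals $\mu_{\mathbf{Z}_j}+\epsilon$ with $\epsilon$ being $1$-subgaussian, so $\hat{\mu}_{\mathbf{Z}_j}$ is an average of conditionally i.i.d. subgaussian samples; a subgaussian tail bound together with a union bound over the $k^n$ configurations and the at most $T$ possible sample counts gives $\Pr(G^c)\le 2k^nT\delta$. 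With $\delta=1/T^2$ this is $O(k^n/T)$, so since each round costs at most $1$ (as $\mu_a\in[0,1]$), the event $G^c$ contributes at most $T\cdot\Pr(G^c)=O(k^n)$ to the expected regret, which is absorbed into the final bound.

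On $G$, optimism gives $\textbf{UCB}_a(t-1)=\sum_j\text{UCB}_{\mathbf{Z}_j}(t-1)P(\mathbf{Z}_j\mid a)\ge\sum_j\mu_{\mathbf{Z}_j}P(\mathbf{Z}_j\mid a)=\mu_a$ for every $a$, using that the weights are a probability vector; in particular $\textbf{UCB}_{a^*}(t-1)\ge\mu^*$. Since $a_t$ maximizes the arm-UCB, $\textbf{UCB}_{a_t}(t-1)\ge\mu^*$, and the two-sided bound on $\text{UCB}_{\mathbf{Z}_j}-\mu_{\mathbf{Z}_j}$ valid on $G$ yields the per-round regret bound
$$\mu^*-\mu_{a_t}\le\textbf{UCB}_{a_t}(t-1)-\mu_{a_t}\le 2\sqrt{2\log(1/\delta)}\sum_{j=1}^{k^n}\frac{P(\mathbf{Z}_j\mid a_t)}{\sqrt{1\vee T_{\mathbf{Z}_j}(t-1)}}.$$

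The main obstacle is summing this over $t$: unlike vanilla UCB, the confidence width is indexed by the observed-configuration count $T_{\mathbf{Z}_j}(t-1)$, whereas the weight multiplying it is the pulling probability $P(\mathbf{Z}_j\mid a_t)$, and the two differ because pulling $a_t$ only induces a random draw of the configuration. I would bridge this gap by a martingale (tower-property) argument: $a_t$ and $T_{\mathbf{Z}_j}(t-1)$ are $\mathcal{F}_{t-1}$-measurable while $\mathbb{E}[\mathbb{1}_{\{\mathbf{Z}_{(t)}=\mathbf{Z}_j\}}\mid\mathcal{F}_{t-1}]=P(\mathbf{Z}_j\mid a_t)$, so taking expectations lets me replace $P(\mathbf{Z}_j\mid a_t)$ by the increment $\mathbb{1}_{\{\mathbf{Z}_{(t)}=\mathbf{Z}_j\}}$ without changing the expected value. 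For each fixed $j$ the resulting sum telescopes along the times the configuration is observed, and $\sum_{s=1}^{m}s^{-1/2}\le 2\sqrt{m}$ gives $O(\sqrt{T_{\mathbf{Z}_j}(T)})$. Finally, Cauchy--Schwarz with the identity $\sum_j T_{\mathbf{Z}_j}(T)=T$ yields $\sum_j\sqrt{T_{\mathbf{Z}_j}(T)}\le\sqrt{k^nT}$, so the expected regret on $G$ is $O(\sqrt{\log(1/\delta)}\,\sqrt{k^nT})=\tilde{O}(\sqrt{k^nT})$; combining with the $O(k^n)$ contribution from $G^c$ closes the proof.
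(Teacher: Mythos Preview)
Your argument is correct and follows the same overall architecture as the paper: define a clean event on which all $k^n$ per-configuration confidence intervals hold, use optimism to upper bound the instantaneous regret by a weighted sum of confidence widths $\sum_j P(\mathbf{Z}_j\mid a_t)/\sqrt{1\vee T_{\mathbf{Z}_j}(t-1)}$, convert this to the observed-count telescoping sum, and finish with Cauchy--Schwarz over the $k^n$ configurations using $\sum_j T_{\mathbf{Z}_j}(T)=T$.

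The one substantive difference is in how you reconcile the weight $P(\mathbf{Z}_j\mid a_t)$ with the indicator $\mathbb{1}_{\{\mathbf{Z}_{(t)}=\mathbf{Z}_j\}}$. The paper adds and subtracts the indicator, bounds the ``indicator'' part deterministically by the telescoping/Cauchy--Schwarz argument, and then controls the martingale-difference part $\sum_t\sum_j(\cdot)(P(\mathbf{Z}_j\mid a_t)-\mathbb{1}_{\{\mathbf{Z}_{(t)}=\mathbf{Z}_j\}})$ via Azuma's inequality, obtaining a high-probability bound on $R_T$ before taking expectation. You instead drop the indicator $\mathbb{1}_G$ (legitimately, since the summand is nonnegative), and then use the tower property $\mathbb{E}[\mathbb{1}_{\{\mathbf{Z}_{(t)}=\mathbf{Z}_j\}}\mid\mathcal{F}_{t-1}]=P(\mathbf{Z}_j\mid a_t)$ to swap the two directly in expectation. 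Your route is shorter and avoids Azuma entirely, which is perfectly adequate for the expected-regret statement of the theorem; the paper's Azuma step additionally yields a high-probability regret bound as a byproduct, at the cost of a slightly heavier argument. Both lead to the same $\tilde{O}(\sqrt{k^nT})$ rate.
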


\subsubsection{Causal TS (C-TS)}
Thompson Sampling (TS) algorithm needs to update the posterior distributions for all arms. 
In this problem, there are $(k+1)^N$ distributions to update, which leads to huge regret when $N$ is large.
Similar to UCB algorithms, causal information can greatly help TS improve the performance when $k^n < (k+1)^N$. 
Again, by using the expectation decomposition formula $\mu_a = \sum_{j=1}^{k^n}\mathbb{E}\left[Y|\text{Pa}_Y=\mathbf{Z}_j\right]P(\text{Pa}_Y=\mathbf{Z}_j|a)$, our C-TS algorithm only updates the posterior distributions for $Y|_{\text{Pa}_Y=\mathbf{Z}_j}, j=1,\ldots,k^n$ as the $P(\text{Pa}_Y=\mathbf{Z}_j|a)$ terms are known.

We provide two C-TS algorithms where Algorithm~\ref{algo:TS_Beta} uses Beta distribution as its prior and Algorithm~\ref{algo:TS_Gaussian} uses Gaussian distribution as its prior. 
At every round t, both C-TS algorithms sample from the posterior distributions for $Y|_{\text{Pa}_Y=\mathbf{Z}_j}, j=1,\ldots,k^n$, then construct the estimated reward mean denoted by $\hat{\mu}_a$ for $\forall a\in \mathcal{A}$ using causal information. The intervention arm with the highest estimated reward will be pulled, reward $Y_t$ and parent node values $\mathbf{Z}_{(t)}$ will be revealed accordingly. Parameters for Beta or Gaussian distribution are updated according to Beta-Bernoulli and Gaussian-Gaussian prior-posterior updating formulas.
\begin{algorithm}[tb]
	\caption{C-TS with Beta Prior (If $Y \in [0,1]$)}
	\label{algo:TS_Beta}
	\begin{algorithmic}
		\STATE \textbf{Input:} Horizon $T$, action set $\mathcal{A}$, causal graph $\mathcal{G}$, all $P(Pa_Y|a)$, number of parent variables $n$, number of values each parent variable can take on: $k$.
		\STATE \textbf{Initialization:} Value assignments to parent variables: $\mathbf{Z}_j$, $S_{\textbf{Z}_j}^0 = F_{\textbf{Z}_j}^0 = 1$, for $j = 1,\ldots,k^n$.
		\FOR{$t\in \{1,\ldots,T\}$}
		\STATE Sample $\hat{\theta}_j(t)$ from beta distn with parameters $(S_{\mathbf{Z}_j}^{t-1},F_{\mathbf{Z}_j}^{t-1})$, for $j = 1,\ldots,k^n$.
		\FOR{action $a\in\mathcal{A}$}
		\STATE $\hat{\mu}_a = \sum_{j=1}^{k^n}\hat{\theta}_j(t)P(\text{Pa}_Y=\mathbf{Z}_j|a)$
		\ENDFOR
		\STATE $a_t = \argmax_a \hat{\mu}_a$
		\STATE Pull arm $a_t$ and observe reward $\tilde{Y}_t$ and its parent nodes values of $\mathbf{Z}_{(t)}$. Perform a Bernoulli trial with success probability $Y_t$ and observe the output $Y_t$.
		\IF{$Y_t=1$}
		\STATE $S_{\mathbf{Z}_{(t)}}^t = S_{\mathbf{Z}_{(t)}}^{t-1}+1$
		\ELSE
		\STATE $F_{\mathbf{Z}_{(t)}}^t = F_{\mathbf{Z}_{(t)}}^{t-1}+1$
		\ENDIF
		\ENDFOR
	\end{algorithmic}
\end{algorithm}

\begin{algorithm}[tb]
	\caption{C-TS with Gaussian Prior}
	\label{algo:TS_Gaussian}
	\begin{algorithmic}
		\STATE \textbf{Input:} Horizon $T$, action set $\mathcal{A}$, causal graph $\mathcal{G}$, all $P(Pa_Y|a)$, number of parent variables $n$, number of values each parent variable can take on: $k$.
		\STATE \textbf{Initialization:} Value assignments to parent variables: $\mathbf{Z}_j$, $k_{\mathbf{Z}_j}=0$, $\hat{\mu}_{\mathbf{Z}_j}=0$, for $j = 1,\ldots,k^n$.
		\FOR{$t\in \{1,\ldots,T\}$}
		\STATE Sample $\hat{\theta}_j(t)$ $\sim$ $N(\hat{\mu}_{\mathbf{Z}_j},\frac{1}{k_{\mathbf{Z}_j}+1})$ ,for $j = 1,\ldots,k^n$.
		\FOR{action $a\in\mathcal{A}$}
		\STATE $\hat{\mu}_a = \sum_{j=1}^{k^n}\hat{\theta}_j(t)P(\text{Pa}_Y=\mathbf{Z}_j|a)$
		\ENDFOR
		\STATE $a_t = \argmax_a \hat{\mu}_a$
		\STATE Pull arm $a_t$ and observe the parent nodes values of Y denoted by $\mathbf{Z}_{(t)}$ and reward $Y_t$. 
		\STATE Update $k_{\mathbf{Z}_t}: = k_{\mathbf{Z}_t}+1$
		\STATE Update $\hat{\mu}_{\mathbf{Z}_t}: = \frac{\hat{\mu}_{\mathbf{Z}_t}k_{\mathbf{Z}_t}+Y_t}{k_{\mathbf{Z}_t}+1}$
		\ENDFOR
	\end{algorithmic}
\end{algorithm}

\begin{thm}[Bayesian Regret Bound for C-TS]\label{thm:TS_bayes}
	Let $Y|_{\text{Pa}_Y=\mathbf{Z}_j} = \mathbb{E}\left[Y|\text{Pa}_Y=\mathbf{Z}_j\right]+\epsilon$, for $j=1,\ldots,k^n$, where $\epsilon$ is a mean zero, $1$-subgaussian distributed random error. Then the Bayesian regret of policies in Algorithm~\ref{algo:TS_Beta} and Algorithm~\ref{algo:TS_Gaussian} are both be bounded by:
	\begin{align*}
	BR_T = \tilde{O}\left(\sqrt{k^nT}\right).
	\end{align*}
\end{thm}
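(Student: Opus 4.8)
The plan is to bound the Bayesian regret by reducing it to the confidence-width analysis already established for C-UCB in Theorem~\ref{thm:C-UCB}, exploiting the defining property of posterior sampling. I would introduce the very same upper confidence index as in Algorithm~\ref{algo:UCB_knownGraph}, but evaluated along the history $\mathcal{H}_{t-1}$ generated by C-TS:
$$U_t(a) = \sum_{j=1}^{k^n}\Big(\hat{\mu}_{\mathbf{Z}_j}(t-1) + \sqrt{\tfrac{2\log(1/\delta)}{1\vee T_{\mathbf{Z}_j}(t-1)}}\Big)P(\text{Pa}_Y=\mathbf{Z}_j|a),$$
with $\delta = 1/T^2$. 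The point is that $U_t(\cdot)$ is, as a function, measurable with respect to $\mathcal{H}_{t-1}$, so I may invoke the key posterior-sampling identity: conditioned on $\mathcal{H}_{t-1}$, the pulled arm $a_t$ and the random optimal arm $a^*$ are identically distributed, hence $\mathbb{E}[U_t(a^*)\mid\mathcal{H}_{t-1}] = \mathbb{E}[U_t(a_t)\mid\mathcal{H}_{t-1}]$. This holds because $\hat{\theta}_j(t)$ is drawn from the exact posterior of $\mathbb{E}[Y\mid\text{Pa}_Y=\mathbf{Z}_j]$ under the prior $Q$ (Beta--Bernoulli or Gaussian--Gaussian conjugacy), so $a_t=\argmax_a\sum_j\hat{\theta}_j(t)P(\mathbf{Z}_j|a)$ inherits the posterior law of $a^*=\argmax_a\mu_a$; this single identity is what makes the Beta and Gaussian versions share the same bound.

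Using the identity, I would write the per-round regret as
$$\mu_{a^*}-\mu_{a_t} = \underbrace{\big(\mu_{a^*}-U_t(a^*)\big)}_{\text{(I)}} + \underbrace{\big(U_t(a^*)-U_t(a_t)\big)}_{\text{(II)}} + \underbrace{\big(U_t(a_t)-\mu_{a_t}\big)}_{\text{(III)}},$$
and take total expectation. Term (II) vanishes in expectation by the identity above, leaving $BR_T=\mathbb{E}\big[\sum_t(\text{I})\big]+\mathbb{E}\big[\sum_t(\text{III})\big]$. For term (I), expand $\mu_{a^*}-U_t(a^*)=\sum_j\big(\mathbb{E}[Y|\mathbf{Z}_j]-\text{UCB}_{\mathbf{Z}_j}(t-1)\big)P(\mathbf{Z}_j|a^*)$; on the good event that every configuration mean lies below its confidence bound for all rounds, each summand is nonpositive, so $(\text{I})\le 0$. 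A union bound over the $k^n$ configurations and $T$ rounds with $\delta=1/T^2$ controls the failure probability, so this term contributes only an additive lower-order term to $BR_T$.

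Term (III) is the crux, and it is handled exactly as in the proof of Theorem~\ref{thm:C-UCB}. On the good event, $\text{UCB}_{\mathbf{Z}_j}(t-1)-\mathbb{E}[Y|\mathbf{Z}_j]\le 2\sqrt{2\log(1/\delta)/(1\vee T_{\mathbf{Z}_j}(t-1))}$, so
$$(\text{III}) \le 2\sum_{j=1}^{k^n}\sqrt{\tfrac{2\log(1/\delta)}{1\vee T_{\mathbf{Z}_j}(t-1)}}\,P(\mathbf{Z}_j|a_t).$$
The main obstacle I expect is that the counts $T_{\mathbf{Z}_j}$ increment according to the realized parent configuration $\mathbf{Z}_{(t)}\sim P(\cdot|a_t)$ rather than deterministically, so one cannot telescope directly against the weight $P(\mathbf{Z}_j|a_t)$. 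I would resolve this with the observation $\mathbb{E}[\mathbb{1}\{\mathbf{Z}_{(t)}=\mathbf{Z}_j\}\mid\mathcal{H}_{t-1},a_t]=P(\mathbf{Z}_j|a_t)$: replacing each weight $P(\mathbf{Z}_j|a_t)$ by the indicator $\mathbb{1}\{\mathbf{Z}_{(t)}=\mathbf{Z}_j\}$ up to a martingale-difference term (controlled by Azuma--Hoeffding), the remaining sum collapses to $\sum_j\sum_t\sqrt{1/(1\vee T_{\mathbf{Z}_j}(t-1))}\,\mathbb{1}\{\mathbf{Z}_{(t)}=\mathbf{Z}_j\}=O(\sum_j\sqrt{T_{\mathbf{Z}_j}(T)})$ via the elementary bound $\sum_{s=1}^{m}s^{-1/2}=O(\sqrt m)$.

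A final Cauchy--Schwarz step using $\sum_j T_{\mathbf{Z}_j}(T)=T$ gives $\sum_j\sqrt{T_{\mathbf{Z}_j}(T)}\le\sqrt{k^n T}$, so term (III) is $\tilde{O}(\sqrt{k^n T})$. Combining the (vanishing) term (II), the lower-order term (I), and term (III) then yields $BR_T=\tilde{O}(\sqrt{k^n T})$, with the argument identical for the Beta and Gaussian priors since only the posterior-matching identity and the $1$-subgaussian concentration of $\hat{\mu}_{\mathbf{Z}_j}$ are used.
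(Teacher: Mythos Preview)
Your proposal is correct and follows essentially the same approach as the paper: you introduce the same confidence index $U_t(\cdot)$, invoke the posterior-sampling identity $\mathbb{E}[U_t(a^*)\mid\mathcal{H}_{t-1}]=\mathbb{E}[U_t(a_t)\mid\mathcal{H}_{t-1}]$, show term~(I) is nonpositive on the good event, and bound term~(III) by the martingale/Azuma replacement of $P(\mathbf{Z}_j|a_t)$ with $\mathbb{1}\{\mathbf{Z}_{(t)}=\mathbf{Z}_j\}$ followed by Cauchy--Schwarz over the $k^n$ configurations. The only cosmetic difference is that the paper writes the decomposition as two terms (absorbing your vanishing term~(II) directly via the identity $\mathbb{E}[\mu_{a^*}-U_t(a_t)\mid\mathcal{F}_{t-1}]=\mathbb{E}[\mu_{a^*}-U_t(a^*)\mid\mathcal{F}_{t-1}]$) rather than three, but the analysis is otherwise line-for-line the same.
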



\subsection{CAUSAL LINEAR BANDIT ALGORITHMS}
Previous section demonstrates how we use causal knowledge to improve the multi-armed bandit algorithms. In our setting, the reward $Y$ directly depends on its $n$ parent nodes, then a natural extension is to consider the linear modeling case:
$Y|_{\text{Pa}_Y=\mathbf{Z}} = f(\mathbf{Z})^T\theta + \epsilon$,
where $f$ denotes the feature function applied on the parent nodes of $Y$, $\theta$ denotes the linear coefficient and $\epsilon$ is a zero mean, 1-subgaussian distributed random error.

We can write the expected reward mean for $\forall a\in\mathcal{A}$ as:
\begin{align*}
\mu_a 
&= \langle \sum_{j=1}^{k^n}f(\mathbf{Z}_j)P(\text{Pa}_Y=\mathbf{Z}_j|a),\theta\rangle.
\end{align*}
To this point, we demonstrate that linearly modeling the reward's parent nodes is just a special case of standard linear bandit problem, where the feature vector for $a\in\mathcal{A}$ is $m_a:=\sum_{j=1}^{k^n}f(\mathbf{Z}_j)P(\text{Pa}_Y=\mathbf{Z}_j|a)$. Thus, we easily extend C-UCB and C-TS to this particular linear bandit setting.

Causal linear UCB (CL-UCB) algorithm (Algorithm~\ref{algo:CL-UCB}) and causal linear TS (CL-TS) algorithm (Algorithm~\ref{algo:CL-TS}) are straightforward linear UCB and linear TS algorithms. It is helpful in the sense that the regret dependence on $\sqrt{k^n}$ can be further reduced to the dimension of linear coefficient $\theta$ denoted by $d$ while linear reward over parent variables holds.

\begin{algorithm}[h]
	\caption{Causal Linear UCB (CL-UCB)}
	\label{algo:CL-UCB}
	\begin{algorithmic}
		\STATE \textbf{Input:} horizon $T$, action set $\mathcal{A}$, all $P(Pa_Y|a)$.
		\STATE \textbf{Initialization:} $V_0 = I_d$, $\hat{\theta}_0 = 0_d$, $g=0_d$, $\beta = 1+\sqrt{2\log\left(T\right)+d\log\left(1+\frac{T}{d}\right)}$.
		\FOR{$t=1,\ldots,T$}
		\FOR{$a\in \mathcal{A}$}
		\STATE $\text{UCB}_a(t) = \max_{\theta\in \mathcal{C}_t} \langle \theta,m_a\rangle = \langle \hat{\theta}_{t-1},m_a\rangle+\beta\left\|m_a\right\|_{V_{t-1}^{-1}}$, where $\mathcal{C}_t = \left\{\theta\in \mathbb{R}^d: \left\|\theta-\hat{\theta}_{t-1}\right\|_{V_{t-1}} \leq \beta \right\}$
		\ENDFOR
		\STATE $a_t = \argmax_{a\in\mathcal{A}} \text{UCB}_a(t)$
		\STATE Pull arm $a_t$ and observe parent node $Z_{(t)}$ and reward $Y_t$.
		\STATE Update $V_t=V_{t-1}+m_{a_t}m_{a_t}^T$, $g=g+m_{a_t}Y_t$, $\hat{\theta}_t = V_t^{-1}g$
		\ENDFOR
	\end{algorithmic}
\end{algorithm}

\begin{algorithm}[h]
	\caption{Causal Linear TS (CL-TS)}
	\label{algo:CL-TS}
	\begin{algorithmic}
		\STATE \textbf{Input:} Horizon $T$, action set $\mathcal{A}$, all $P(Pa_Y|a)$, standard deviation parameter $v$.
		\STATE \textbf{Initialization:} $V_0 = I_d$, $\hat{\theta} = 0_d$, $g = 0_d$.
		\FOR{$t\in \{1,\ldots,T\}$}
		\STATE Sample $\tilde{\theta}_t \sim N(\hat{\theta},v^2V_t^{-1})$
		\FOR{action $a\in\mathcal{A}$}
		\STATE $\hat{\mu}_a(t) = 
		\langle m_a,\tilde{\theta}_t \rangle$
		\ENDFOR
		\STATE $a_t = \argmax_a \hat{\mu}_a(t)$
		\STATE Pull arm $a_t$ and observe the parent nodes values of Y denoted by $Z_{(t)}$ and reward $Y_t$.
		\STATE Update $V_t = V_{t-1}+m_{a_t}m_{a_t}^T$, $g = g+m_{a_t}Y_t$ and $\hat{\theta} = V_t^{-1}g$
		\ENDFOR
	\end{algorithmic}
\end{algorithm}

\begin{thm}[Regret Bound for CL-UCB \& CL-TS adapted from Chapter 19 in~\citet{lattimore2018bandit}] \label{thm:CL}
	Assume that $\left\|\theta\right\|_2 \leq 1$ and $\left\|f(\mathbf{Z})\right\|_2\leq 1$, the dimension of $\theta$ and $f(\mathbf{Z})$ are both $d$, then run CL-UCB with $\beta = 1+\sqrt{2\log\left(T\right)+d\log\left(1+\frac{T}{d}\right)}$ and CL-TS, the regret of CL-UCB and Bayesian regret of CL-TS can both be bounded by
	\begin{align*}
	\mathbb{E}\left[R_{T_{CL-UCB}}\right], BR_{T_{CL-TS}} 
	= \tilde{O}\left(d\sqrt{T}\right).
	\end{align*}
\end{thm}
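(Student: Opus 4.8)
The plan is to exploit the reduction already spelled out just before the statement: writing $m_a := \sum_{j=1}^{k^n} f(\mathbf{Z}_j) P(\text{Pa}_Y = \mathbf{Z}_j | a)$, the expected reward is exactly linear, $\mu_a = \langle m_a, \theta \rangle$, so the causal linear bandit \emph{is} a standard $d$-dimensional stochastic linear bandit with action-feature set $\{m_a : a \in \mathcal{A}\}$ and unknown parameter $\theta$. The regret bound then follows by invoking the standard LinUCB / linear-TS analysis of Chapter 19 in \citet{lattimore2018bandit}; the only causal-specific work is to check that the reduced instance meets the boundedness hypotheses of that analysis.

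First I would verify the norm bound on the reduced features. Since $\{P(\text{Pa}_Y = \mathbf{Z}_j | a)\}_j$ is a probability distribution over $j$ and $\|f(\mathbf{Z}_j)\|_2 \le 1$, the triangle inequality gives $\|m_a\|_2 \le \sum_{j} P(\text{Pa}_Y = \mathbf{Z}_j | a)\,\|f(\mathbf{Z}_j)\|_2 \le 1$. Combined with $\|\theta\|_2 \le 1$ and the $1$-subgaussian noise $\epsilon$, this places the reduced problem squarely in the regime covered by the standard linear bandit results, and it is precisely what makes the dimension $d$, rather than the number of interventions $|\mathcal{A}|$, the governing quantity (the action count entering only through logarithmic factors).

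For CL-UCB I would run the optimism argument. With $\beta$ as specified, the self-normalized tail inequality for vector-valued martingales ensures that, with probability at least $1 - 1/T$, the true $\theta$ lies in every ellipsoid $\mathcal{C}_t = \{\theta : \|\theta - \hat{\theta}_{t-1}\|_{V_{t-1}} \le \beta\}$ simultaneously. On this event optimism holds, $\text{UCB}_{a_t}(t) \ge \mu_{a^*}$, so the per-round regret is bounded by the confidence width $\mu_{a^*} - \mu_{a_t} \le 2\beta \|m_{a_t}\|_{V_{t-1}^{-1}}$. Summing over $t$, applying Cauchy--Schwarz, and invoking the elliptical-potential (log-determinant) lemma $\sum_{t=1}^T \|m_{a_t}\|_{V_{t-1}^{-1}}^2 \le 2 d \log(1 + T/d)$ yields $\mathbb{E}[R_T] = O(\beta \sqrt{dT \log(1+T/d)}) = \tilde{O}(d\sqrt{T})$; the failure event contributes only $O(1)$ since $\mu_a \in [0,1]$.

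For CL-TS I would use the Bayesian-regret argument of Russo and Van Roy. The key identity is that, conditioned on the history $\mathcal{F}_{t-1}$, the posterior-sampled action $a_t$ and the optimal action $a^*$ are identically distributed, so for any $\mathcal{F}_{t-1}$-measurable upper confidence functional $U_t$ one has $\mathbb{E}[U_t(a^*) \mid \mathcal{F}_{t-1}] = \mathbb{E}[U_t(a_t) \mid \mathcal{F}_{t-1}]$. Choosing $U_t$ to be the CL-UCB index and decomposing $BR_T = \mathbb{E}\sum_t (\mu_{a^*} - U_t(a^*)) + \mathbb{E}\sum_t (U_t(a_t) - \mu_{a_t})$, the first sum is nonpositive up to the confidence-failure probability while the second is again a sum of confidence widths controlled by the same elliptical-potential bound, giving $BR_T = \tilde{O}(d\sqrt{T})$. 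The hard part here is not any single calculation but marshalling the two pipelines consistently: ensuring the Bayesian confidence widths for CL-TS are calibrated to match those used for CL-UCB so that both branches land on the same $\tilde{O}(d\sqrt{T})$ rate.
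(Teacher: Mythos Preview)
Your proposal is correct and follows essentially the same route as the paper's own proof: reduce to a standard linear bandit via $m_a$, check $\|m_a\|_2\le 1$ from convexity, then for CL-UCB invoke the self-normalized confidence ellipsoid, optimism, Cauchy--Schwarz and the elliptical-potential lemma, and for CL-TS use the posterior-sampling identity $P(a^*=\cdot\mid\mathcal{F}_{t-1})=P(a_t=\cdot\mid\mathcal{F}_{t-1})$ to swap $\text{UCB}_{a^*}$ for $\text{UCB}_{a_t}$ before applying the same width bound. The paper carries out exactly these steps, citing Lemmas corresponding to the self-normalized bound and the log-determinant inequality from \citet{lattimore2018bandit}.
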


\textbf{Remark: } Our algorithms are easily adapted to a more general setup, e.g. there exist a set of observable variables $\mathbf{W}$ that d-separates the manipulable variables and the reward variable and $P(\mathbf{W}|a)$ are known for all realizations $\mathbf{W}, a$. In this senario, one can replace $\text{Pa}_Y$ and $P(\text{Pa}_Y|a)$ in above algorithms with $\mathbf{W}$ and $P(\mathbf{W}|a)$ and achieve $\tilde{O}\left(\sqrt{|\mathbf{W}|T}\right)$ regret, where $|\mathbf{W}|$ refers to the number of realizations of $\mathbf{W}$. This is beneficial when $|\mathbf{W}|\ll\mathcal{A}$ or the reward's direct parents are not known nor observable, but the variables $\mathbf{W}$ are.

\section{LOWER BOUND FOR NON-CAUSAL METHODS} \label{sec:lower}
In this section, we show that it is necessary to use an algorithm that utilizes the causal structure.
We prove that there exists a simple bandit environment with causal information, for which the regret of the standard UCB algorithm scales \emph{at least} exponentially with the size $N$ of {\em all variables}.
For the same environment, our regret upper bounds of C-UCB and C-TS scale \emph{at most} exponentially with the size $n$ of {\em parents}.
Since it is possible to have $N \gg n$, this demonstrates the necessity of using causal bandits algorithms. 

We now describe the environment.
The bandit environment $\nu$ has $N$ variables $X_1,\ldots,X_N$, each can take a value from $\{1,2\}$. 
The marginal distribution for $X_i$ is $P(X_i=1) = p_i$, for $i=1,\ldots,N$. 
The reward node $Y$ is generated by $Y = \Delta X_1+\epsilon$, where $\Delta$ is a positive coefficient to be determined and $\epsilon \sim \mathcal{N}(0,1)$.
Actions are denoted by $do(X_1=i_1,\ldots,X_N=i_N)$, where $i_1,\ldots,i_N \in \{0,1,2\}$, and $0$ is an additional dimension for the case that we do not set any value for a variable.

In this example, there are three types of actions:
\begin{itemize}
	\item Type 0: Actions with $i_1=0$.
	\item Type 1: Actions with $i_1=1$.
	\item Type 2: Actions with $i_1=2$.
\end{itemize}
The expected reward for three types actions are $2\Delta-p_1\Delta$, $\Delta$ and $2\Delta$ respectively. Type 2 actions are optimal arms, while the gaps for type 0 and type 1 are $p_1\Delta$ and $\Delta$ respectively.

Now we present the lower bound of standard UCB for this environment.
\begin{thm}[Lower Bound for Standard UCB] \label{thm:UCB_lower}
	For any $\epsilon > 0$, there exists a constant $C_\epsilon > 0$ such that the following holds. In the bandit environment $\nu$ described above, running standard UCB algorithm for $T$ steps will incur regret at least $C_\epsilon \sqrt{3^N}T^{1/2-\epsilon}$.
\end{thm}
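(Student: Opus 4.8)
\emph{Overall strategy.} Since standard UCB ignores the causal structure, it treats $\nu$ as a bandit with $K=3^N$ unrelated arms; of these, $3^{N-1}$ are optimal (type $2$), while the remaining $2\cdot 3^{N-1}$ are suboptimal, namely $3^{N-1}$ type-$1$ arms with gap $\Delta$ and $3^{N-1}$ type-$0$ arms with gap $p_1\Delta$. Writing $T_a(T)$ for the number of pulls of arm $a$, the regret decomposes as $\mathbb{E}[R_T]=\sum_{a}\Delta_a\,\mathbb{E}[T_a(T)]$, so it suffices to lower bound how often UCB pulls the suboptimal (in particular the type-$1$) arms. The plan is to choose the gap $\Delta$ so small that, over the whole horizon, the confidence bonuses dominate every mean difference; UCB is then never able to commit and is forced into essentially uniform exploration, pulling every arm $\Omega(T/K)$ times. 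Taking $\Delta$ as large as this pure-exploration regime permits then makes the weighted pull count $\sum_a \Delta_a T_a(T)$ as large as possible.

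\emph{Choice of gap and concentration event.} I would set $\Delta = \sqrt{3^N}\,T^{-1/2-\epsilon}$. This is a factor $T^{-\epsilon}$ below the critical scale $\sqrt{K\log(1/\delta)/T}$ at which UCB would start to distinguish arms, and that margin is precisely what guarantees pure exploration for all $t\le T$ and what surfaces as the exponent $1/2-\epsilon$. I would then work on a good event $\mathcal{E}$ on which, for every arm $a$ and every pull count $s$, the empirical mean satisfies $\mu_a-c_0\,b_s\le \hat\mu_a(s)\le \mu_a+c_0\,b_s$ with $b_s=\sqrt{2\log(1/\delta)/s}$ and a fixed constant $c_0<1$; crucially the lower tail must be controlled at a fraction $c_0$ of the bonus so that the confidence bonus is not washed out. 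Because each arm's reward is $1$-subgaussian (the extra variance of type-$0$ arms is $O(\Delta^2)$ and negligible), a subgaussian tail bound, a peeling argument over $s$, and a union bound over the $K$ arms give $\Pr(\mathcal{E})$ bounded below by a positive constant once $T$ is large enough relative to the number of arms $3^N$; this is the regime in which the claimed exponential-in-$N$ lower bound is meaningful.

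\emph{Uniform-exploration lower bound (main step).} The heart of the proof is to show that on $\mathcal{E}$ every arm is pulled $\Omega(T/K)$ times. On $\mathcal{E}$ the UCB index of the least-pulled arm (pulled $\underline{s}$ times) is at least $\mu_a+(1-c_0)\,b_{\underline{s}}\ge (1-c_0)\,b_{\underline{s}}$, while any arm pulled $s$ times has index at most $2\Delta+(1+c_0)\,b_{s}$. Whenever UCB pulls an arm it must beat the index of the least-pulled arm, and since $2\Delta$ is negligible compared with $b_{\underline{s}}\ge b_{T/K}\asymp\sqrt{K\log(1/\delta)/T}$ by the choice of $\Delta$, this comparison forces the pulled arm's current count to lie within a fixed constant factor of $\underline{s}$. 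Hence no arm's pull count can ever exceed a constant multiple of the running minimum, and since the counts sum to $T$ across $K$ arms, each arm is pulled at least $\Omega(T/K)$ times. I expect this inductive index comparison to be the main obstacle: it requires the one-sided lower bound on the least-pulled arm's index (hence the delicate $c_0<1$ control of the lower tail simultaneously over exponentially many arms) and care in tracking pull counts at the moment of each pull rather than at the horizon.

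\emph{Conclusion.} Restricting the regret sum to the $3^{N-1}$ type-$1$ arms, on $\mathcal{E}$ we obtain $\mathbb{E}[R_T]\ge 3^{N-1}\cdot\Delta\cdot\Omega(T/3^N)=\Omega(\Delta T)=\Omega\!\left(\sqrt{3^N}\,T^{1/2-\epsilon}\right)$. Multiplying by the constant lower bound on $\Pr(\mathcal{E})$ and folding all absolute constants into a single $C_\epsilon$ yields $\mathbb{E}[R_T]\ge C_\epsilon\sqrt{3^N}\,T^{1/2-\epsilon}$, as claimed.
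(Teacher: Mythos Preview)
Your proposal is essentially correct but follows a genuinely different route from the paper.

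The paper does \emph{not} analyze UCB's index dynamics at all. Instead it proceeds indirectly: it first observes (Claim~\ref{claim:unstructured}) that $\nu$ sits inside the unstructured $K$-arm Gaussian class with $K=3^N$, and then invokes the finite-time instance-dependent lower bound for $p$-order policies (Theorem~16.4 in \citet{lattimore2018bandit}). That result says that \emph{any} policy whose regret is uniformly $\le CT^p$ over a neighborhood $\mathcal{E}(\nu)$ must, by a change-of-measure argument, satisfy
\[
\mathbb{E}R_T(\pi,\nu)\;\ge\;\frac{2}{(1+\epsilon)^2}\sum_{i:\Delta_i>0}\left(\frac{(1-p)\log T+\log(\epsilon\Delta_i/8C)}{\Delta_i}\right)^+.
\]
Since standard UCB is such a policy with $C=C'_\epsilon\sqrt{3^N}$ and $p=1/2+\epsilon$, one plugs in the $2\cdot 3^{N-1}$ suboptimal arms, chooses $\Delta=8\rho\,C\,T^{p-1}$, optimizes $\rho$, and reads off $\mathbb{E}R_T\ge \tfrac{0.35}{24C'_\epsilon}\sqrt{3^N}\,T^{1/2-\epsilon}$. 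The argument never opens the UCB black box; it uses only the algorithm's \emph{upper} bound.

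Your approach is the complementary ``white-box'' one: you fix $\Delta=\sqrt{3^N}\,T^{-1/2-\epsilon}$, work on a concentration event with constant $c_0\in(1/\sqrt{2},1)$, and show by index comparison that UCB is forced into near-uniform exploration, so each suboptimal arm is pulled $\Omega(T/K)$ times. The inductive pull-count argument you sketch (pulled-arm count $\le C_1\cdot\underline{s}$ at each step, hence $\max\le C_1\min+1$, hence $\min\ge\Omega(T/K)$) does go through, and your negligibility check $\Delta\ll b_{\underline{s}}$ holds for all $\underline{s}\le T$ once $T^{2\epsilon}\log T\gtrsim K$, which matches the implicit large-$T$ regime of the paper's proof as well.

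What each buys: the paper's route is short, needs no concentration over $3^N$ arms, and immediately generalizes to TS or any consistent policy (as the paper notes). Your route is self-contained, does not rely on an external theorem, and gives a transparent mechanistic picture of \emph{why} UCB wastes pulls here; the price is the delicate two-sided $c_0<1$ concentration and the inductive bookkeeping you correctly flag as the main obstacle. Both end up choosing $\Delta$ of the same order and yield the same $\sqrt{3^N}\,T^{1/2-\epsilon}$ scaling.
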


This theorem can be generalized to provide lower bounds for a broad class of MAB algorithms ($p$-order policies, see appendix), including standard TS.
We give a proof outline of this theorem.
The main idea is to apply Theorem~\ref{thm:finite_dep_lower} in~\citet{lattimore2018bandit}.
This is an algorithm-dependent lower bound that shows if an algorithm has a uniform regret upper bound for all instances in the unstructured bandit environment class (defined below), then it must have a particular instance-dependent regret lower bound.
We first show $\nu$ belongs to the unstructured bandit environment class. Next, since the standard UCB has a uniform  regret upper bound for this class, we can apply Theorem~\ref{thm:finite_dep_lower} in~\citet{lattimore2018bandit} to obtain a lower bound of standard UCB for $\nu$.

Now we give more details.
The unstructured Gaussian bandit environment class is defined as follows.
\begin{defini}[Unstructured Gaussian bandit environment class]\label{def:unstructured}
	A Gaussian bandit environment class $\mathcal{E}$ is unstructured if $\mathcal{A}$ is finite and there exists set of Gaussian distributions $\mathcal{M}_a:=\{\mathcal{N}(\mu,\sigma^2),\mu\in\mathbb{R},\sigma^2\leq 1\}$ for each $a\in\mathcal{A}$ such that
	\begin{align*}
	\mathcal{E} = \{\nu = (P_a:a\in\mathcal{A}):P_a\in\mathcal{M}_a, \forall a\in\mathcal{A}\}.
	\end{align*}
\end{defini}
Note this environment class is the Cartesian product over all distributions in $\mathcal{M}_a$ for each arm. 
This is a large class, and in particular it contains the environment $\nu$, which we formalize in the claim below.
\begin{claim}\label{claim:unstructured}
	Denote a unstructured $K$-arm Gaussian bandit environment class by $\mathcal{E}_K(\mathcal{N})$. Given any causal graph $\mathcal{G}$ and conditional probabilities $P(\text{Pa}_Y|a), \forall a\in\mathcal{A}$ where $Y$ is the reward variable and $\text{Pa}_Y$ are its parents, for any bandit instance $\nu'$ that satisfies:
	\begin{itemize}
		\item arms are $K$ interventions over a set of variables that are consistent with $\mathcal{G}$ and the corresponding conditional probabilities, and
		\item the conditional reward given parent values are independent Gaussian distributions:
		\begin{align*}	Y|_{\text{Pa}_Y=\mathbf{Z}} = \mathbb{E}\left[Y|\text{Pa}_Y=\mathbf{Z}\right]+\epsilon,
		\end{align*}
		where $\epsilon\sim\mathcal{N}(0,1)$,
	\end{itemize}
	we have that $\nu'\in\mathcal{E}_K(\mathcal{N})$.
\end{claim}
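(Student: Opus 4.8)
The plan is to prove membership by exploiting that $\mathcal{E}_K(\mathcal{N})$ is, by Definition~\ref{def:unstructured}, a Cartesian product over the $K$ arms, so that no constraint links the reward distributions of different arms. Consequently, to certify $\nu' \in \mathcal{E}_K(\mathcal{N})$ it suffices to check the two defining conditions separately: that the action set is finite (immediate, since $\nu'$ has $K$ interventions), and that for \emph{each} arm $a\in\mathcal{A}$ the reward distribution induced by $\nu'$ lies in $\mathcal{M}_a$. The conceptually important point is that although the causal structure couples the arm means $\mu_a = \sum_{j} \mathbb{E}[Y \mid \text{Pa}_Y = \mathbf{Z}_j]\,P(\text{Pa}_Y = \mathbf{Z}_j \mid a)$ through the shared conditional means $\mathbb{E}[Y\mid\mathbf{Z}_j]$, this coupling never obstructs membership: the product class already contains \emph{every} tuple of admissible per-arm distributions, hence in particular the specific correlated tuple realized by $\nu'$. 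This is exactly why a structured causal instance can be viewed as living inside the larger unstructured class on which standard UCB is analyzed, and it is the substantive content of the claim.

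For the per-arm check I would fix an arbitrary $a = \text{do}(\mathbf{X}=\mathbf{x})$ and decompose its reward by the law of total probability over the $k^n$ parent configurations. Using the hypothesis $Y \mid \text{Pa}_Y = \mathbf{Z}_j \sim \mathcal{N}(\mathbb{E}[Y \mid \mathbf{Z}_j],\,1)$ together with $\text{Pa}_Y \sim P(\cdot \mid a)$, the reward under $a$ is the finite mixture $\sum_{j=1}^{k^n} P(\text{Pa}_Y = \mathbf{Z}_j \mid a)\,\mathcal{N}(\mathbb{E}[Y \mid \mathbf{Z}_j],\,1)$, whose mean equals $\mu_a$ by the decomposition formula and whose per-component noise is $1$-subgaussian.

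The main obstacle is reconciling this mixture with the literal requirement $P_a \in \mathcal{M}_a$, i.e.\ that $P_a$ be a \emph{single} Gaussian of variance at most $1$: a mixture of unit-variance Gaussians with two distinct active means is not Gaussian, and its variance strictly exceeds $1$ (for instance a Type~$0$ arm of $\nu$, where $X_1$ is not intervened on, gives reward variance $1 + p_1(1-p_1)\Delta^2 > 1$). I would resolve this by reading $\mathcal{M}_a$ through its subgaussian content, which is all the downstream argument uses: each centered arm reward stays subgaussian with a parameter controlled by the bounded spread of the conditional means, so $\nu'$ belongs to the unstructured class in the only sense needed to invoke Theorem~\ref{thm:finite_dep_lower}, namely that standard UCB attains a uniform $\tilde{O}(\sqrt{KT})$ regret bound over the class and that single-arm Gaussian alternatives with controlled KL divergence can be formed for the change-of-measure step, any excess variance being absorbed into the constant $C_\epsilon$. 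Alternatively, one can state the claim only for interventions that fix all parents of $Y$, in which case each arm reward is exactly $\mathcal{N}(\mu_a,1)\in\mathcal{M}_a$ and the membership holds verbatim.
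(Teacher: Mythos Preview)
Your analysis diverges from the paper's at the point where you model the per-arm reward. You treat $Y\mid a$ as a \emph{mixture} over parent configurations (first draw $\text{Pa}_Y\sim P(\cdot\mid a)$, then draw $Y$ from the corresponding conditional Gaussian), which matches the generative mechanism described in Section~\ref{sec:setup} and which, as you correctly observe, is generically non-Gaussian with variance exceeding~$1$. The paper instead takes the ``independent'' hypothesis in the claim at face value and writes
\[
Y\mid a \;=\; \sum_{\mathbf{Z}} P(\text{Pa}_Y=\mathbf{Z}\mid a)\,\bigl(Y\mid_{\text{Pa}_Y=\mathbf{Z}}\bigr)
\]
as a \emph{weighted sum of independent Gaussian random variables}. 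That sum is exactly Gaussian, and its variance is $\sum_{\mathbf{Z}} P(\text{Pa}_Y=\mathbf{Z}\mid a)^2\le \sum_{\mathbf{Z}} P(\text{Pa}_Y=\mathbf{Z}\mid a)=1$, so membership in $\mathcal{M}_a$ follows directly with no workaround.

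What each approach buys: the paper's weighted-sum reading gives a two-line verification of the claim as literally stated, but it models a different observation mechanism than the one in which a single parent configuration $\mathbf{Z}_{(t)}$ is realized per round. Your mixture reading is faithful to that round-by-round mechanism, and your diagnosis that Type~$0$ arms of the instance~$\nu$ have non-Gaussian reward with variance $1+p_1(1-p_1)\Delta^2>1$ is accurate under it; the remedies you propose (relaxing $\mathcal{M}_a$ to a subgaussian class so that the uniform $\tilde O(\sqrt{KT})$ upper bound for UCB still applies, or restricting to interventions that fix all of $\text{Pa}_Y$) are both sound ways to recover what the downstream application to Theorem~\ref{thm:UCB_lower} actually requires.
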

The proof of this claim is given in the appendix.
We can finish the proof of Theorem~\ref{thm:UCB_lower} by applying Theorem 16.4 in \cite{lattimore2018bandit} (presented in Theorem~\ref{thm:finite_dep_lower} in the appendix) on the environment $\nu$.


Note that Theorem~\ref{thm:finite_dep_lower} in~\cite{lattimore2018bandit} cannot be applied to causal algorithms.
Causal algorithms proposed in this paper can only perform well on environments equipped with the fixed input causal graph $\mathcal{G}$ and the corresponding conditional probabilities, and thus causal algorithms cannot provide uniform (sub-linear) regret upper bounds for all environments in the unstructured bandit environment class. 

\section{EXPERIMENTS}\label{sec:experiments}
We compare the performance of standard bandit with causal bandit algorithms to validate that causal information plays an important role in bandit algorithms. We also show that when the reward is truly generated by a linear combination of the reward's parent node, CL-TS and CL-UCB can further achieve smaller regrets comparing with C-TS and C-UCB that only use causal structures but not the linear property.

\subsection{PURE SIMULATION}
We set up a pure simulation environment that will allow us to run scaling experiments in order to qualitatively test the scaling predictions of our theory.
Throughout our pure simulations, we use a model in which there is a reward variable $Y$, reward's parent variables $W_1,\ldots,W_n$, taking values from $\{1,2\}$, and non-parent variables $X_1,\ldots,X_n$, taking values from $\{1,\ldots,m\}$. Reward $Y$ directly depends on its parent variables $W_1,\ldots,W_n$, while each parent variable $W_i$ directly depends on the corresponding non-parent variable $X_i$ ($i=1,\ldots,n$). The causal graph is displayed in Figure~\ref{fig:causal_graph_exp}.

\begin{figure}[tb]
	\centering
	\includegraphics[width=.25\textwidth]{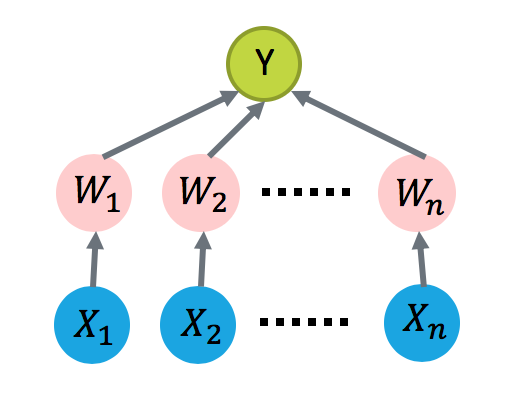}
	\caption{Causal Graph for Pure Simulation: only blue variables can be intervened.}
	\label{fig:causal_graph_exp}
\end{figure}

\textbf{Intervention set:} Denote an intervention by
\begin{align*}
	a=\text{do}(X_1=i_1,\ldots,X_n=i_n),
\end{align*}
where $i_1,\ldots,i_n \in \{0,1,\ldots,m\}$, $0$ is an additional dimension for the case that we do not set any value for a variable. That means only non-parent variables can be intervened, the parent variables of the reward are not under control.

Reward $Y$ is generated by:
$Y = \langle f(W_1,\ldots,W_n),\theta \rangle+\epsilon$,
where $f$ is a function applied on parent variables, $\theta$ is a $n$-dimensional vector, $\epsilon$ is a sub-gaussian random error.

\subsubsection{A Gentle Start: $m=3,n=4$}\label{sec:pure_simulation_fix}
We begin with a simple case where $m = 3, n=4$. The marginal distributions for $X_1,X_2,X_3,X_4$ and conditional probabilities for $W_i=1|X_i, i=1,\ldots,4$ are displayed in Table~\ref{table:psim_dist} (Section~\ref{sec:app_table}). 


For simplicity, we set
	$f(W_1,W_2,W_3,W_4) := (W_1,W_2,W_3,W_4)$,
and the error is a Gaussian variable $\epsilon \sim N(0,0.1^2)$.


\textbf{UCB algorithms}: The true linear coefficient $\theta$ is $(0.25,0.25,-0.25,-0.25)$. To approximate the expected regret, for each UCB algorithm we plot the average regret over 20 simulations.

\textbf{TS algorithms}: We plot both of the regret under $\theta=(0.25,0.25,-0.25,-0.25)$ and the Bayesian regret. For the frequentist one, the procedure is same as UCB algorithms described above. For the Bayesian one, the ``true" parameter $\theta$ is generated from its prior distribution $N(0,0.1^2I_4)$ for 20 times as Monte Carlo simulation. Then we plot the averaged regret over these 20 simulations to approximate the Bayesian regret.

Regret comparison plots are displayed in Figure~\ref{fig:Algo_comparison}.

\begin{figure*}[tb]
	\centering
	\includegraphics[width=.32\textwidth]{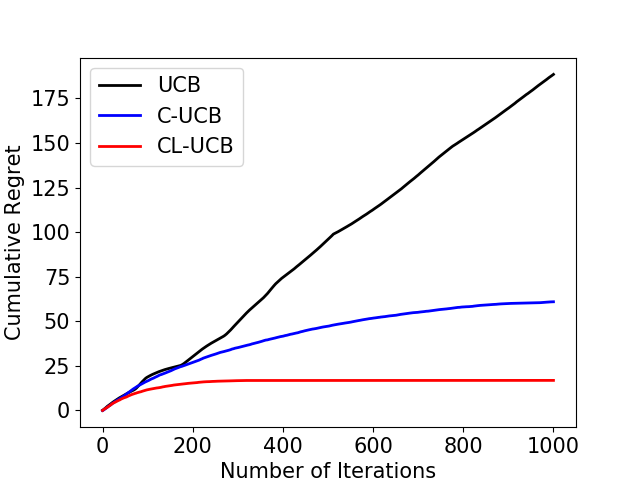}
	~
	\includegraphics[width=.32\textwidth]{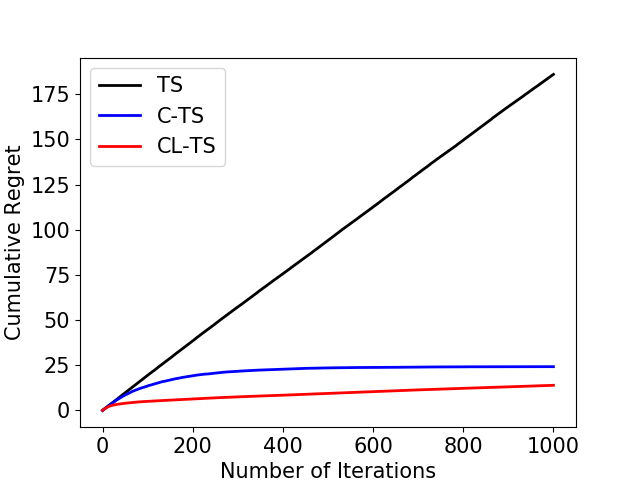}
	~
	\includegraphics[width=.32\textwidth]{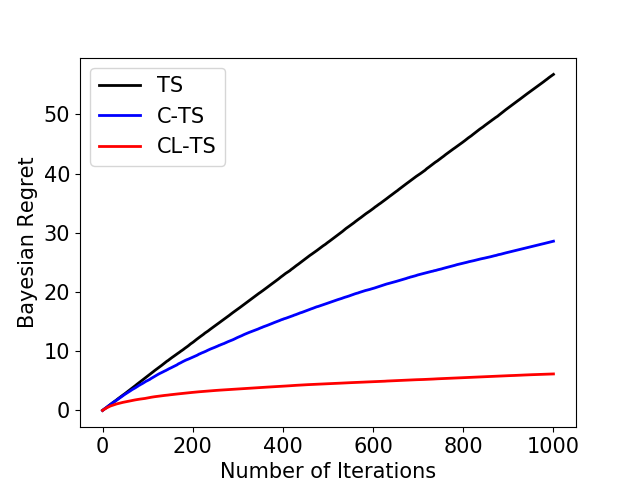}
	\caption{Regret comparison for $m=3, n=4$. Left: UCB regrets. Middle: TS regrets. Right: TS Bayesian regrets.}
	\label{fig:Algo_comparison}
\end{figure*}

\subsubsection{Scaling with Non-Parent Variables' Range: $\mathbf{m}$}\label{sec:pure_simulation_Xmax}
In this section, we fix $n=4$ while changing the domain range of non-parent variables $m$ from $2$ to $6$ and see how it affects the performance of all six algorithms.

In each simulation, the marginal probabilities for each non-parent variable $X_i$: $\{P(X_i=j)\}_{j=1}^m$ are generated from independent Dirichlet distributions with parameter $\alpha = \mathbb{1}_m$ and the conditional probabilities $P(W_i=1|X_i=j), i=1,\ldots,n;j=1,\ldots,m$ are generated randomly from $[0,1]$. Throughout we fix the $\theta=(0.25,0.25,-0.25,-0.25)$. For each algorithm, the final regret is averaged over 20 simulations. Regret comparison plot is displayed in Figure~\ref{fig:pure_simulation_Xmax}.

\begin{figure}[tb]
	\centering
	\includegraphics[width=.33\textwidth]{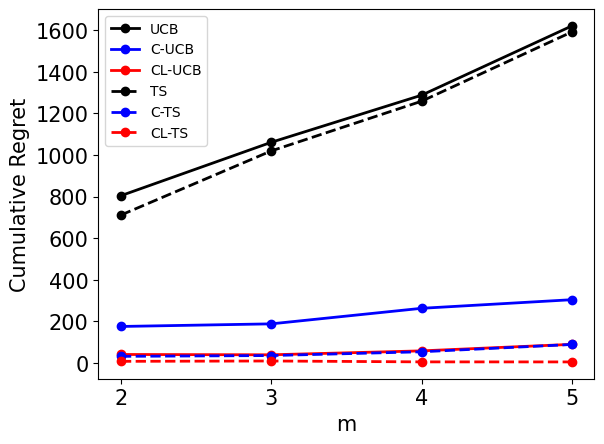}
	\caption{Cumulative regret v.s. $m$, fix $n=4$, time horizon $T=5000$.}
	\label{fig:pure_simulation_Xmax}
\end{figure}

\subsubsection{Scaling with Size of Parent Variables: $\mathbf{n}$}
In this section we fix $m=3$ while changing the number of parent/non-parent variables $n$ from $2$ to $6$. Since $X_i$ takes value from $\{1,2,3\}$ and $W_i$ takes value from $\{1,2\}$, by adding additional pair $W_i\sim X_i$, the intervention size increases much faster than the number of value assignments on parent variables. We compare the performance of six algorithms. 

In each simulation, the marginal probabilities for each non-parent variable $\{P(X_i=j)\}_{j=1}^m$ and conditional probabilities for each parent variable $P(W_i=1|X_i=j), j=1,\ldots,m$ are sampled in the same way as Section~\ref{sec:pure_simulation_Xmax}. 
To keep the reward at the same scale as $m$ varies, we use $\theta = (1,0,\ldots,0)$, where only the first element of linear coefficient is $1$ and other elements are all zeros.
For each algorithm, the final regret is averaged over 20 simulations. Regret comparison plot is displayed in Figure~\ref{fig:pure_simulation_k_T10000_N20_simple_coef}.


\begin{figure}[tb]
	\centering
	\includegraphics[width=.33\textwidth]{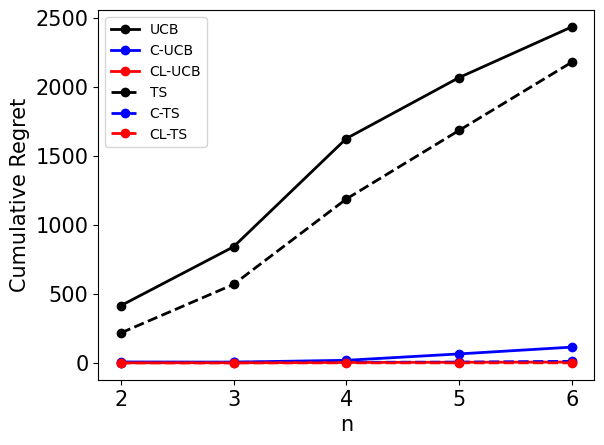}
	\caption{Cumulative regret v.s. $n$, fix $m=3$, time horizon $T=10000$.}
	\label{fig:pure_simulation_k_T10000_N20_simple_coef}
\end{figure}

\subsubsection{Conclusion of Pure Simulation}
In Figure~\ref{fig:Algo_comparison}, the left and middle plots demonstrate the performance of algorithms for a fixed causal bandit environment. We observe that for UCB and TS, causal linear algorithms outperform the ``non-linear" causal algorithms moderately and all causal algorithms outperform the standard bandit algorithms significantly. In the third plot, we demonstrate the performance in terms of Bayesian regret for three TS algorithms, and their performance order matches with the first two plots.

In Figure~\ref{fig:pure_simulation_Xmax}, we fix $n$ and the time horizon $T$ and compare the performance of the algorithm as $m$ increases. The regret of C-UCB, C-TS, CL-UCB and CL-TS do not vary as $m$ increases as their regret only depends on the size of parent variable value assignments. However, the regret of UCB and TS keeps increasing as $m$ grows. Thus, we validate that the performance of our causal algorithms are not affected by the number of interventions on non-parent variables.

In Figure~\ref{fig:pure_simulation_k_T10000_N20_simple_coef}, we fix $m$ and time horizon $T$ and compare the performance of all algorithms as $n$ grows. The regret of four causal algorithms does not vary a lot as $n$ increases. We show in our theorem that in worst case, the regret of C-TS and C-UCB grow with $\sqrt{k^n}$ and the regret of CL-TS and CL-UCB grow with $d$ for fixed time horizon. And we also observe in this simulation that for certain coefficient such as $\theta = (1,0,\ldots,0)$, the growth is even slower. Clearly the regret of standard UCB and TS algorithms keeps increasing as $n$ grows. 

\subsection{EMAIL CAMPAIGN DATA}
The experimental set up in this section is inspired by the email campaign data from Adobe.
The reward variable is binary: whether the commercial links inside the email are clicked or not by the recipient.
Features under control are ``product", such as Photoshop, Acrobat XI Pro, Adobe Stock, etc., ``purpose", such as awareness, promotion, operation, nurture, etc., ``send out time" that includes morning, afternoon and evening. Even though these features are highly correlated with the reward variable, but they are not the direct causes. The variables that are actually causing the email links clicking are: the subject length, two different email templates, send out time, so we set these variables as the reward's parents. 
The three features in blue that can be intervened are further connected with reward's parent variables as in Figure~\ref{fig:causal_graph_EC}. Each combination of product and purpose has an email pool, once they are fixed, the company picks out emails from the pool. Thus, subject length and email body template cannot be intervened, they depend on the emails picking out from the pool, which is a random process.

\begin{figure}[tb]
	\centering
	\includegraphics[width=.4\textwidth]{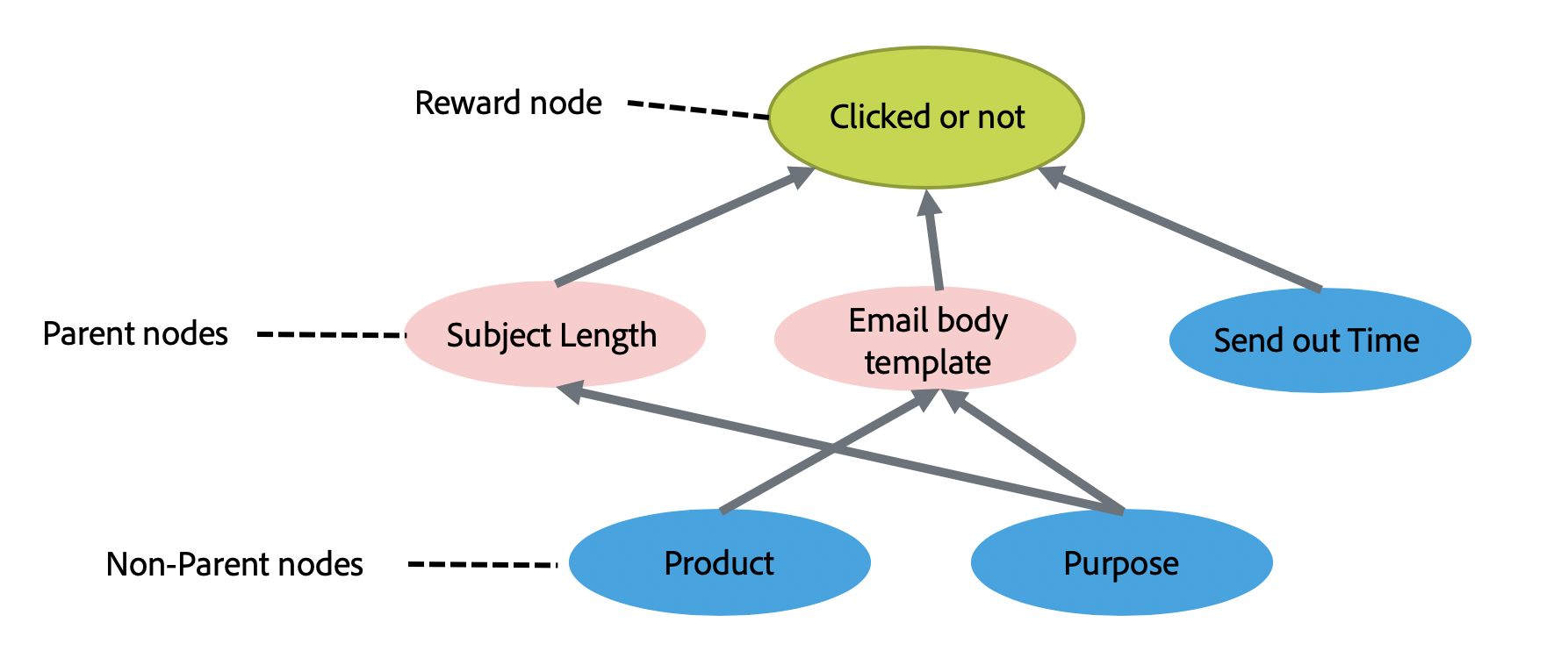}
	\caption{Causal Graph for Email Campaign: only blue nodes are under control.}
	\label{fig:causal_graph_EC}
\end{figure}

From historical knowledge, email with ``subject length" fewer than 7 words are more likely to be opened, so we denote ``subject length" by $Z_1$, taking values from $\{1,2\}$, representing ``less than 7 words" or not. ``Template" is denoted by $Z_2$, taking values from $\{1,2\}$, representing template indices ``1" or ``2". ``Send out time" is denoted by $Z_3$, taking values from $\{1,2,3\}$, representing ``morning", ``afternoon" and ``evening". 
We consider ``Photoshop" (1), ``Acrobat XI Pro"(2), ``Adobe Stock" (3) for the ``product" variable, denoted by $X_1$; ``Operational" (1), ``Promo" (2), ``Nurture" (3) and ``Awareness" (4) for purpose variable, denoted by $X_2$.

The marginal probabilities for $X_1$ and $X_2$ and $Z_3$, conditional distributions for $Z_1,Z_2$ are displayed in Table~\ref{table:email_dist} (Section~\ref{sec:app_table}). The reward follows a Bernoulli distribution, with parameter $1-(Z_1+Z_2+Z_3)/9$.


\textbf{Interventions} are denoted by $\text{do}(X_1=i_1,X_2=i_2,X_3=i_3)$, where $i_1,i_3 \in \{0,1,2,3\}$, $i_2\in \{0,1,2,3,4\}$, 0 means no intervention on a variable.

In Figure~\ref{fig:email_comparison}, we compare the performance of UCB, C-UCB, TS (beta prior) and C-TS (beta prior). We plot the average regret over 20 simulations to approximate the expected cumulative regret for each method. Clearly both of C-UCB and C-TS outperforms UCB and TS significantly. Besides, we observe that TS algorithms generally perform better than the UCB algorithms. This phenomenon is also consistent with previous empirical discoveries~\citep{chapelle2011empirical}.

\begin{figure}[h]
	\centering
	\includegraphics[width=0.33\textwidth]{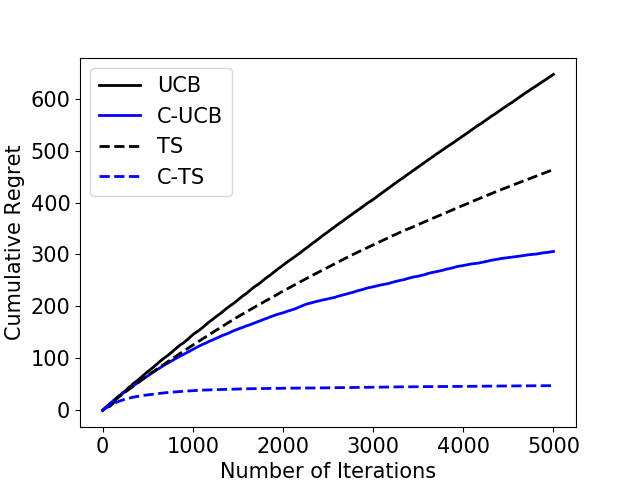}
	\caption{Regret Comparison of UCB, TS, C-UCB and C-TS for email campaign problem.}
	\label{fig:email_comparison}
\end{figure}

\section{DISCUSSION \& FUTURE WORK} \label{sec:diss}
We proposed C-UCB and C-TS algorithms and showed that their regret can be bounded by $\tilde{O}\left(\sqrt{k^nT}\right)$. We further extended linear bandit algorithms to their causal versions and showed the regret bound of CL-UCB and CL-TS can be reduced to $\tilde{O}\left(d\sqrt{T}\right)$. There are several interesting directions for future work.


\textbf{Extension to MDPs:} We plan to extend our causal bandit framework to the MDP (Markov decision process) setting. The key feature of causal MDP is that there is an additional dimension: \textit{state}, which can be affected by the previous intervention and the reward behaves differently under different status. This phenomenon is typical in many practical settings, including mobile health, online advertising and online education. 

\textbf{Learning causal structure:} In many cases the causal structure is not known beforehand or only partially understood. Therefore it is desirable to develop methods that can recover the underlying causal structure and minimize the cumulative regret at the same time. An ideal algorithm that can efficiently learn the causal structure and the bandit together should achieve lower regret than normal bandit algorithms when the time horizon $T$ is large. \citet{ortega2014generalized} empirically shows that TS can recover causal structures in some cases. Combining causal learning algorithm with those that minimize cumulative regret is an interesting direction to investigate.
\section*{ACKNOWLEDGEMENT}
Part of this work was done while Yangyi Lu was visiting Adobe. Ambuj Tewari would like to acknowledge the support of an Adobe Data Science Research Award, a Sloan Research Fellowship, and NSF grant CAREER IIS-1452099.

\bibliographystyle{apalike}
\bibliography{ref.bib}
\newpage
\onecolumn
\appendix
\section{Proof for Theorems}
We prove Theorem~\ref{thm:TS_bayes} before Theorem~\ref{thm:C-UCB}, since the former one includes more technical steps and main parts of the two proofs are similar. 
\subsection{Proof of Theorem~\ref{thm:TS_bayes} (C-TS)}\label{proof:CTS_bayes}
\begin{proof}
	By definition, $\mu_a := E\left[Y|a\right] = \sum_{i=1}^{k^n} E\left[Y|Pa_Y = Z_i\right]P\left(Pa_Y = Z_i|a\right)$, $a^* = \argmax_a \mu_a$.
	
	Define:
	\begin{align*}
	T_Z(t) &:= \sum_{s=1}^{t}\mathbb{1}_{\{Z_{(s)} = Z\}},\\
	\hat{\mu}_Z(t) &:= \frac{1}{T_Z(t)}\sum_{s=1}^{t}Y_s\mathbb{1}_{\{Z_{(s)} = Z\}},\\
	\mu_Z &:= E\left[Y|Pa_Y = Z\right],
	\end{align*}
	where $Z_{(s)}$ denotes the observed values of parent nodes for $Y$, in round $s$. Note that $\hat{\mu}_Z(t) = 0$ when $T_Z(t) = 0$.
	
	Let $E$ be the event that for all $t\in [T]$, $i\in[k^n]$ such that $\max_{a\in\mathcal{A}} P(Pa_Y=Z_i|a)>0$, we have
	\begin{align*}
	|\hat{\mu}_{Z_i}(t-1) - \mu_{Z_i}| \leq \sqrt{\frac{2log(1/\delta)}{1\vee T_{Z_i}(t-1)}}.
	\end{align*}
	
	For fixed $t$ and $i$, by Sub-Gaussian property, we can show
	\begin{align*}
	P\left(|\hat{\mu}_{Z_i}(t) - \mu_{Z_i}| \geq \sqrt{\frac{2\log(1/\delta)}{1\vee T_{Z_i}(t)}}\right) &= \mathbb{E}\left[P\left(|\hat{\mu}_{Z_i}(t) - \mu_{Z_i}| \geq \sqrt{\frac{2\log(1/\delta)}{1\vee T_{Z_i}(t)}}\middle| Z_{(1)},\ldots,Z_{(t)}\right)\right]\\
	&\leq \mathbb{E}\left[2\delta\right] = 2\delta.
	\end{align*}
	
	By union bound, we have $P\left(E^c\right) \leq 2\delta T k^n$. 
	
	The Bayesian regret can be written as
	\begin{align*}
	BR_T = \mathbb{E}\left[\sum_{t=1}^{T}\left(\mu_{a^*}-\mu_{a_t}\right)\right] = \mathbb{E}\left[\sum_{t=1}^{T}\mathbb{E}\left[\mu_{a^*}-\mu_{a_t}|\mathcal{F}_{t-1}\right]\right],
	\end{align*}
	where $\mathcal{F}_{t-1} = \sigma\left(a_1,Z_1,Y_1,\ldots,a_{t-1},Z_{t-1},Y_{t-1}\right)$.
	
	The key insight is to notice that by definition of Thompson Sampling, 
	\begin{align}\label{equ:TSinsight}
	P\left(a^*=\cdot|\mathcal{F}_{t-1}\right) = P\left(a_t=\cdot|\mathcal{F}_{t-1}\right).
	\end{align}
	
	Further, define $\text{UCB}_a(t):=\sum_{j=1}^{k^n}\text{UCB}_{Z_j}(t)P(Pa_Y=Z_j|a)$, we can bound the conditional expected difference between optimal arm and the arm played at round $t$ using equation~\ref{equ:TSinsight} by
	\begin{align*}
	&\mathbb{E}\left[\mu_{a^*}-\mu_{a_t}|\mathcal{F}_{t-1}\right] \\
	&= \mathbb{E}\left[\mu_{a^*}-\text{UCB}_{a_t}(t-1)+\text{UCB}_{a_t}(t-1)-\mu_{a_t}|\mathcal{F}_{t-1}\right]\\
	&= \mathbb{E}\left[\mu_{a^*}-\text{UCB}_{a^*}(t-1)+\text{UCB}_{a_t}(t-1)-\mu_{a_t}|\mathcal{F}_{t-1}\right].
	\end{align*}
	Next by tower rule, we have
	\begin{align*}
		BR_T = \mathbb{E}\left[\sum_{t=1}^{T}\left(\mu_{a^*}-\text{UCB}_{a^*}(t-1)+\text{UCB}_{a_t}(t-1)-\mu_{a_t}\right)\right].
	\end{align*}
	On event $E^c$, by the original definition of $BR_T$ we have $BR_T\leq 2T$. On event $E$, the first term is negative showing by the definition of $\text{UCB}_{\mathbf{Z}_j},j=1,\ldots,k^n$ and
	\begin{align*}
		\mu_{a^*}-\text{UCB}_{a^*}(t-1) = \sum_{j=1}^{k^n}\left(\mathbb{E}\left[Y|Pa_Y=Z_j\right]-\text{UCB}_{Z_j}(t-1)\right)P(Pa_Y=Z_j|a^*) \leq 0,
	\end{align*}
	because $\mathbb{E}\left[Y|Pa_Y=Z_j\right]-\text{UCB}_{Z_j}(t-1)\leq 0$ on event $E$.
	Also on event $E$, the second term can be bounded by
	\begin{align}
	 \mathbb{1}_E\sum_{t=1}^{T}\left(\text{UCB}_{a_t}(t-1)-\mu_{a_t}\right) &=\mathbb{1}_E\sum_{t=1}^{T}\sum_{j=1}^{k^n}\left(\text{UCB}_{Z_j}(t-1)-\mathbb{E}\left[Y|Pa_Y=Z_j\right]\right)P(Pa_Y=Z_j|a_t) \notag\\
	 &\leq \mathbb{1}_E\sum_{t=1}^{T} \sum_{j=1}^{k^n}\sqrt{\frac{8\log(1/\delta)}{1\vee T_{Z_j}(t-1)}}P(Pa_Y=Z_j|a_t) \notag\\
	 & \leq\mathbb{1}_E \sum_{t=1}^{T} \sum_{j=1}^{k^n}\sqrt{\frac{8\log(1/\delta)}{1\vee T_{Z_j}(t-1)}}\left(P(Pa_Y=Z_j|a_t)-\mathbb{1}_{\{Z_{(t)}=Z_j\}}+\mathbb{1}_{\{Z_{(t)}=Z_j\}}\right). \label{equ:TS_azuma}
	\end{align}
	The second part of equation~\ref{equ:TS_azuma} can be bounded by
	\begin{align*}
	\mathbb{1}_E\sum_{t=1}^{T} \sum_{j=1}^{k^n}\sqrt{\frac{8\log(1/\delta)}{1\vee T_{Z_j}(t-1)}}\mathbb{1}_{\{Z_{(t)}=Z_j\}} &\leq \mathbb{1}_E\sum_{j=1}^{k^n}\int_{0}^{T_{Z_j}(T)}\sqrt{\frac{8\log(1/\delta)}{s}} ds\\
	&\leq \sum_{j=1}^{k^n} \sqrt{32T_{Z_j}(T)\log(1/\delta)}\\
	&\leq \sqrt{32k^nT\log(1/\delta)}.
	\end{align*}
	For the first part of equation~\ref{equ:TS_azuma}, we define 
	$X_t:=\sum_{s=1}^{t} \sum_{j=1}^{k^n}\sqrt{\frac{8\log(1/\delta)}{1\vee T_{Z_j}(s-1)}}\left(P(Pa_Y=Z_j|a_s)-\mathbb{1}_{\{Z_{(s)}=Z_j\}}\right)$, $X_0:=0$. Note that $\{X_t\}_{t=0}^T$ is a martingale sequence and we have
	\begin{align*}
	|X_t-X_{t-1}|^2 &= \left|\sum_{j=1}^{k^n}\sqrt{\frac{8\log(1/\delta)}{1\vee T_{Z_j}(t-1)}}\left(P(Pa_Y=Z_j|a_t)-\mathbb{1}_{\{Z_{(t)}=Z_j\}}\right)\right|^2\\
	&\leq 32\log(1/\delta).
	\end{align*}
	By applying Azuma's inequality we have
	\begin{align*}
	P(|X_T|>\sqrt{k^nT\log(T)}\log(T)) \leq \exp\left(-\frac{k^n\log^3(T)}{32\log(1/\delta)}\right).
	\end{align*}
	We take $\delta = 1/T^2$, combine the first and second part of equation~\ref{equ:TS_azuma}, we show that with probability $1-P(E^c)-\exp\left(-\frac{k^n\log^2(T)}{64}\right) = 1-2k^n/T-\exp\left(-\frac{k^n\log^2(T)}{64}\right)$, 
	\begin{align*}
	R_T \leq 16\sqrt{k^nT\log(T)}\log(T).
	\end{align*}
	Thus the Bayesian regret can be bounded by:
	\begin{align*}
	\mathbb{E}\left[R_T\right] &\leq P(E^c)\times 2T+\exp\left(-\frac{k^n\log^2(T)}{64}\right)\times 2T+\sqrt{64k^nT\log(T)}\log(T)\\
	&\leq C\sqrt{k^nT\log(T)}\log(T).
	\end{align*}
	where $C$ is a constant and the above inequality holds for large $T$.
	Thus we have proved that $\mathbb{E}\left[R_T\right] = \tilde{O}\left(\sqrt{k^nT}\right)$.
\end{proof}

\subsection{Proof of Theorem~\ref{thm:C-UCB} (C-UCB)}\label{proof:CUCB}
\begin{proof}
	Let $E$ be the event that for all $t\in [T]$, $j\in [k^n]$, we have
	\begin{align*}
	\left|\hat{\mu}_{Z_j}(t-1) - \mathbb{E}\left[Y|Pa_Y=Z_j\right]\right| \leq \sqrt{\frac{2\log(1/\delta)}{1\vee T_{Z_j}(t-1)}}.
	\end{align*}
	Use same proof idea in Theorem~\ref{thm:TS_bayes}, we have $P(E^c) \leq 2\delta T k^n$. Define $\text{UCB}_a(t):=\sum_{j=1}^{k^n}\text{UCB}_{Z_j}(t)P(Pa_Y=Z_j|a)$, the regret can be rewritten as
	\begin{align*}
	R_T &= \sum_{t=1}^{T}(\mu_{a^*}-\mu_{a_t})\\
	&= \sum_{t=1}^{T} \left(\mu_{a^*}-\text{UCB}_{a_t}(t-1)+\text{UCB}_{a_t}(t-1)-\mu_{a_t}\right).
	\end{align*}
	On event $E^c$, $R_T\leq 2T$. On event $E$ we can show
	\begin{align*}
		\mu_{a^*}-\text{UCB}_{a_t}(t-1) &= \sum_{j=1}^{k^n}\mathbb{E}\left[Y|Pa_Y=Z_j\right]P(Pa_Y=Z_j|a^*)-\sum_{j=1}^{k^n}\text{UCB}_{Z_j}(t-1)P(Pa_Y=Z_j|a_t)\\
		&\leq \sum_{j=1}^{k^n}\text{UCB}_{Z_j}(t-1)P(Pa_Y=Z_j|a^*)-\sum_{j=1}^{k^n}\text{UCB}_{Z_j}(t-1)P(Pa_Y=Z_j|a_t) \leq 0,
	\end{align*}
	where the last inequality follows by the way to choose $a_t$ in Algorithm~\ref{algo:UCB_knownGraph}, the second last inequality follows by the definition of event $E$. Thus on event $E$ we have
	\begin{align}
		R_T &\leq \sum_{t=1}^{T}\left(\text{UCB}_{a_t}(t-1)-\mu_{a_t}\right) \notag\\
		&=\sum_{t=1}^{T}\sum_{j=1}^{k^n}\left(\text{UCB}_{Z_j}(t-1)-\mathbb{E}\left[Y|Pa_Y=Z_j\right]\right)P(Pa_Y=Z_j|a_t) \notag\\
		&\leq \sum_{t=1}^{T} \sum_{j=1}^{k^n}\sqrt{\frac{8\log(1/\delta)}{1\vee T_{Z_j}(t-1)}}P(Pa_Y=Z_j|a_t)\notag\\
		& \leq \sum_{t=1}^{T} \sum_{j=1}^{k^n}\sqrt{\frac{8\log(1/\delta)}{1\vee T_{Z_j}(t-1)}}\left(P(Pa_Y=Z_j|a_t)-\mathbb{1}_{\{Z_{(t)}=Z_j\}}+\mathbb{1}_{\{Z_{(t)}=Z_j\}}\right). \label{equ:UCB_azuma}
	\end{align}
	The second part of Equation~\ref{equ:UCB_azuma} can be bounded by
	\begin{align*}
		\sum_{t=1}^{T} \sum_{j=1}^{k^n}\sqrt{\frac{8\log(1/\delta)}{1\vee T_{Z_j}(t-1)}}\mathbb{1}_{\{Z_{(t)}=Z_j\}} &\leq \sum_{j=1}^{k^n}\int_{0}^{T_{Z_j}(T)}\sqrt{\frac{8\log(1/\delta)}{s}} ds\\
		&\leq \sum_{j=1}^{k^n} \sqrt{32T_{Z_j}(T)\log(1/\delta)}\\
		&\leq \sqrt{32k^nT\log(1/\delta)}.
	\end{align*}
	For the first part of equation~\ref{equ:UCB_azuma}, we define $X_t:=\sum_{s=1}^{t} \sum_{j=1}^{k^n}\sqrt{\frac{8\log(1/\delta)}{1\vee T_{Z_j}(s-1)}}\left(P(Pa_Y=Z_j|a_s)-\mathbb{1}_{\{Z_{(s)}=Z_j\}}\right)$, $X_0:=0$. Note that $\{X_t\}_{t=0}^T$ is a martingale sequence. 
	\begin{align*}
		|X_t-X_{t-1}|^2 &= \left|\sum_{j=1}^{k^n}\sqrt{\frac{8\log(1/\delta)}{1\vee T_{Z_j}(t-1)}}\left(P(Pa_Y=Z_j|a_t)-\mathbb{1}_{\{Z_{(t)}=Z_j\}}\right)\right|^2\\
		&\leq 32\log(1/\delta).
	\end{align*}
	By applying Azuma's inequality we have
	\begin{align*}
		P(|X_T|>\sqrt{k^nT\log(T)}\log(T)) \leq \exp\left(-\frac{k^n\log^3(T)}{32\log(1/\delta)}\right).
	\end{align*}
	We take $\delta = 1/T^2$, combine the first and second part of equation~\ref{equ:UCB_azuma}, with probability $1-P(E^c)-\exp\left(-\frac{k^n\log^2(T)}{64}\right) = 1-2k^n/T-\exp\left(-\frac{k^n\log^2(T)}{64}\right)$, the regret can be bounded by
	\begin{align*}
		R_T \leq 16\sqrt{k^nT\log(T)}\log(T).
	\end{align*}
	Thus the expected regret can be bounded by:
	\begin{align*}
		\mathbb{E}\left[R_T\right] &\leq P(E^c)\times 2T+\exp\left(-\frac{k^n\log^2(T)}{64}\right)\times 2T+\sqrt{64k^nT\log(T)}\log(T)\\
		&\leq C\sqrt{k^nT\log(T)}\log(T)
	\end{align*}
	where $C$ is a constant, above inequality holds for large $T$.
	Thus we prove $\mathbb{E}\left[R_T\right] = \tilde{O}\left(\sqrt{k^nT}\right)$
\end{proof}

\subsection{Proof of Theorem~\ref{thm:CL} (CL-TS)}
\begin{lemma}~\citep{lattimore2018bandit} \label{lemma:theta_conv}
	Notations same as algorithm~\ref{algo:CL-UCB} and algorithm~\ref{algo:CL-TS}. Let $\delta \in (0,1)$. Then with probability at least $1-\delta$ it holds that for all $t\in \mathbb{N}$,
	\begin{align*}
		\left\|\hat{\theta}_t-\theta\right\|_{V_t(\lambda)} \leq \sqrt{\lambda}\left\|\theta\right\|_2 + \sqrt{2\log\left(\frac{1}{\delta}\right)+\log\left(\frac{\det V_t(\lambda)}{\lambda^d}\right)}.
	\end{align*}
	Furthermore, if $\left\|\theta^*\right\|\leq m_2$, then $P(\exists t\in \mathbb{N}^+: \theta^* \notin \mathcal{C}_t)\leq \delta$ with
	\begin{align*}
		\mathcal{C}_t = \left\{\theta\in\mathbb{R}^d: \left\|\hat{\theta}_{t-1}-\theta\right\|_{V_{t-1}(\lambda)}\leq m_2\sqrt{\lambda}+\sqrt{2\log\left(\frac{1}{\delta}\right)+\log\left(\frac{\det V_{t-1}(\lambda)}{\lambda^d}\right)} \right\}.
	\end{align*}
\end{lemma}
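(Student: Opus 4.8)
The plan is to recognize Lemma~\ref{lemma:theta_conv} as the standard self-normalized confidence ellipsoid for ridge regression and to prove it by the method of mixtures. Write the regularized least-squares estimator as $\hat{\theta}_t = V_t(\lambda)^{-1}\sum_{s=1}^t m_{a_s}Y_s$ with $V_t(\lambda) = \lambda I_d + \sum_{s=1}^t m_{a_s}m_{a_s}^\top$, and recall that the rewards satisfy $Y_s = \langle m_{a_s},\theta\rangle + \epsilon_s$ where each $\epsilon_s$ is conditionally $1$-subgaussian given the past. Substituting this model and writing $S_t := \sum_{s=1}^t m_{a_s}\epsilon_s$ yields the exact decomposition
\begin{align*}
\hat{\theta}_t - \theta = V_t(\lambda)^{-1}S_t - \lambda V_t(\lambda)^{-1}\theta.
\end{align*}
First I would take the $V_t(\lambda)$-norm of both sides and split it by the triangle inequality into a noise part $\left\|V_t(\lambda)^{-1}S_t\right\|_{V_t(\lambda)} = \left\|S_t\right\|_{V_t(\lambda)^{-1}}$ and a bias part $\lambda\left\|V_t(\lambda)^{-1}\theta\right\|_{V_t(\lambda)}$.

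The bias part is routine: since $V_t(\lambda)\succeq \lambda I_d$ we have $V_t(\lambda)^{-1}\preceq \lambda^{-1}I_d$, so $\lambda\left\|V_t(\lambda)^{-1}\theta\right\|_{V_t(\lambda)} = \lambda\sqrt{\theta^\top V_t(\lambda)^{-1}\theta}\le \sqrt{\lambda}\,\left\|\theta\right\|_2$, which accounts for the first summand in the bound. Everything therefore reduces to controlling the self-normalized noise term $\left\|S_t\right\|_{V_t(\lambda)^{-1}}$ uniformly over all $t$.

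This last step is the main obstacle and is handled by the method of mixtures. For each fixed $\eta\in\mathbb{R}^d$ define $M_t^\eta = \exp\!\left(\sum_{s=1}^t \left(\eta^\top m_{a_s}\epsilon_s - \tfrac{1}{2}(\eta^\top m_{a_s})^2\right)\right)$; conditional $1$-subgaussianity of $\epsilon_s$, together with the fact that $m_{a_s}$ is $\mathcal{F}_{s-1}$-measurable, makes $\{M_t^\eta\}_t$ a nonnegative supermartingale with $\mathbb{E}[M_t^\eta]\le 1$. I would then mix over $\eta$ against a Gaussian prior $\eta\sim\mathcal{N}(0,\lambda^{-1}I_d)$, and by Fubini the mixture $\bar{M}_t = \int M_t^\eta\,dh(\eta)$ is again a supermartingale with $\mathbb{E}[\bar{M}_t]\le 1$. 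Completing the square inside the Gaussian integral gives the closed form $\bar{M}_t = \left(\det(\lambda I_d)/\det V_t(\lambda)\right)^{1/2}\exp\!\left(\tfrac{1}{2}\left\|S_t\right\|_{V_t(\lambda)^{-1}}^2\right)$. Applying Ville's maximal inequality, $P(\exists t:\bar{M}_t\ge 1/\delta)\le\delta$, and rearranging on the complementary event yields $\left\|S_t\right\|_{V_t(\lambda)^{-1}}^2 \le 2\log(1/\delta) + \log\!\left(\det V_t(\lambda)/\lambda^d\right)$ simultaneously for all $t$, using $\det(\lambda I_d)=\lambda^d$.

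Combining the bias and noise bounds gives the first displayed inequality with probability at least $1-\delta$, for all $t$ at once. The confidence-set claim is then immediate: if $\left\|\theta^*\right\|\le m_2$, substituting $\theta^*$ for $\theta$ and index $t-1$ for $t$ shows that on the same high-probability event $\left\|\hat{\theta}_{t-1}-\theta^*\right\|_{V_{t-1}(\lambda)}$ never exceeds $m_2\sqrt{\lambda} + \sqrt{2\log(1/\delta)+\log(\det V_{t-1}(\lambda)/\lambda^d)}$, i.e.\ $\theta^*\in\mathcal{C}_t$ for every $t$, so $P(\exists t:\theta^*\notin\mathcal{C}_t)\le\delta$. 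The delicate points I expect to need care are the \emph{conditional} (rather than unconditional) subgaussianity argument that makes $M_t^\eta$ a supermartingale when the predictors $m_{a_s}$ are predictable but data-dependent, and the exact Gaussian integral that produces the determinant ratio.
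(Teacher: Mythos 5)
Your proof is correct: the paper does not prove this lemma at all but imports it from \citet{lattimore2018bandit} (it is the standard self-normalized confidence ellipsoid of Abbasi-Yadkori et al., Theorem 20.5 in that book), and your argument—the decomposition $\hat{\theta}_t-\theta = V_t(\lambda)^{-1}S_t-\lambda V_t(\lambda)^{-1}\theta$, the $\sqrt{\lambda}\left\|\theta\right\|_2$ bias bound, and the method-of-mixtures supermartingale with Gaussian prior $\mathcal{N}(0,\lambda^{-1}I_d)$ plus Ville's inequality—is exactly the proof given in that cited source. So you have correctly reconstructed the canonical proof, taking essentially the same approach the paper relies on by citation.
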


\begin{lemma}~\citep{lattimore2018bandit} \label{lemma:vector_1}
	Let $x_1,\ldots,x_T \in \mathbb{R}^d$ be a sequence of vectors with $\left\|x_t\right\|_2\leq L < \infty$ for all $t\in [T]$, then
	\begin{align*}
		\sum_{t=1}^{T}\left(1\wedge\left\|x_t\right\|_{V_{t-1}^{-1}}^2\right) \leq 2\log\left(\det V_T\right) \leq 2d\log\left(1+\frac{TL^2}{d}\right),
	\end{align*}
	where $V_t = I_d + \sum_{s=1}^{t}x_sx_s^T$.
\end{lemma}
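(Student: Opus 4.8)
The plan is to prove the two inequalities in Lemma~\ref{lemma:vector_1} separately: the first rests on a determinant-telescoping identity combined with an elementary scalar bound, and the second on AM--GM applied to the eigenvalues of $V_T$. Throughout I would use only the defining relation $V_t = V_{t-1} + x_t x_t^T$ together with $V_0 = I_d$.

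For the first inequality, the key observation is that each update is rank one, so by the matrix determinant lemma
\begin{align*}
\det V_t = \det V_{t-1}\left(1 + \left\|x_t\right\|_{V_{t-1}^{-1}}^2\right).
\end{align*}
Since $\det V_0 = 1$, taking the product over $t$ telescopes to $\det V_T = \prod_{t=1}^T \left(1 + \left\|x_t\right\|_{V_{t-1}^{-1}}^2\right)$, hence $\log\det V_T = \sum_{t=1}^T \log\left(1 + \left\|x_t\right\|_{V_{t-1}^{-1}}^2\right)$. First I would record this identity. Then I would establish the elementary fact that $1 \wedge u \leq 2\log(1+u)$ for every $u \geq 0$: on $[0,1]$ the difference $2\log(1+u) - u$ vanishes at $0$ and has derivative $\frac{1-u}{1+u}\geq 0$, while for $u > 1$ one checks $2\log(1+u) > 2\log 2 > 1$. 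Applying this with $u = \left\|x_t\right\|_{V_{t-1}^{-1}}^2$ term by term and summing yields $\sum_{t=1}^T \left(1 \wedge \left\|x_t\right\|_{V_{t-1}^{-1}}^2\right) \leq 2\log\det V_T$.

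For the second inequality, I would bound the determinant by the trace via AM--GM. Writing $\lambda_1,\ldots,\lambda_d$ for the (nonnegative) eigenvalues of $V_T$,
\begin{align*}
\det V_T = \prod_{i=1}^d \lambda_i \leq \left(\frac{1}{d}\sum_{i=1}^d \lambda_i\right)^d = \left(\frac{\operatorname{tr} V_T}{d}\right)^d.
\end{align*}
Since $\operatorname{tr} V_T = d + \sum_{t=1}^T \left\|x_t\right\|_2^2 \leq d + T L^2$ under the assumption $\left\|x_t\right\|_2 \leq L$, I obtain $\det V_T \leq \left(1 + \frac{TL^2}{d}\right)^d$; taking logarithms and doubling gives the claimed bound $2\log\det V_T \leq 2d\log\left(1 + \frac{TL^2}{d}\right)$.

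There is no serious obstacle; the argument is a self-contained chain of elementary steps. The one point demanding care is the scalar inequality $1 \wedge u \leq 2\log(1+u)$, since it is precisely the device that lets us truncate the potentially large quadratic forms $\left\|x_t\right\|_{V_{t-1}^{-1}}^2$ at $1$ while still controlling their sum by the log-determinant. Everything else---the determinant lemma, the telescoping, and AM--GM---is routine.
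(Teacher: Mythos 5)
Your proof is correct and complete: the matrix determinant lemma with telescoping from $\det V_0 = 1$, the scalar bound $1 \wedge u \leq 2\log(1+u)$ for $u \geq 0$, and the AM--GM trace--determinant estimate $\det V_T \leq \left(\operatorname{tr} V_T / d\right)^d$ together give exactly the claimed chain of inequalities. The paper itself offers no proof of this lemma --- it is cited directly from \citet{lattimore2018bandit} --- and your argument is precisely the standard one given in that reference (the ``elliptical potential'' lemma), so there is nothing to correct or compare beyond noting the match.
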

\begin{proof}
	W define $\beta = 1+\sqrt{2\log\left(T\right)+d\log\left(1+\frac{T}{d}\right)}$ and $V_t = I_d + \sum_{s=1}^tm_{a_s}m_{a_s}^T$ same as Algorithm~\ref{algo:CL-TS}, where $m_{a}:=\sum_{i=1}^{k^n}f(Z_i)P(Pa_Y=Z_i|a)$.
	Define upper confidence bound $\text{UCB}_t:\mathcal{A}\rightarrow \mathbb{R}$ by 
	\begin{align*}
		\text{UCB}_t(a) = \max_{\theta\in\mathcal{C}_t}\langle \theta,m_a\rangle =  <\hat{\theta}_{t-1},m_a>+\beta \left\|m_a\right\|_{V_{t-1}^{-1}},
	\end{align*}
	where $\mathcal{C}_t = \left\{\theta\in \mathbb{R}^d: \left\|\theta-\hat{\theta}_{t-1}\right\|_{V_{t-1}} \leq \beta \right\}$. By Lemma~\ref{lemma:theta_conv}, we have
	\begin{align*}
		P\left(\exists t\leq T: \left\|\hat{\theta}_{t-1}-\theta\right\|_{V_{t-1}} \geq 1+\sqrt{2\log\left(T\right)+\log\left(\det V_t\right)}\right) \leq \frac{1}{T}.
	\end{align*}
	And note $\left\|m_a\right\|_2\leq 1$, thus by geometric means inequality we have 
	\begin{align*}
		\det V_t \leq \left(trace(\frac{V_t}{d})\right)^d \leq \left(1+\frac{T}{d}\right)^d.
	\end{align*}
	Thus, by $\left\|\theta\right\|_2\leq 1$,
	\begin{align*}
		P\left(\exists t\leq T: \left\|\hat{\theta}_{t-1}-\theta\right\|_{V_{t-1}} \geq 1+\sqrt{2\log\left(T\right)+d\log\left(1+\frac{T}{d}\right)}\right) \leq \frac{1}{T}.
	\end{align*}
	Let $E_t$ be the event that $\left\|\hat{\theta}_{t-1}-\theta\right\|_{V_{t-1}} \leq \beta$, $E:=\cap_{t=1}^T E_t$, $a^*:=\argmax_a \sum_{i=1}^{k^n}\langle f(Z_i),\theta\rangle P(Pa_Y=Z_i|a)$, which is a random variable in this setting because $\theta$ is random. Then
	\begin{align}
		BR_T &= \mathbb{E}\left[\sum_{t=1}^{T}\left\langle\sum_{i=1}^{k^n}f(Z_i)\left(P\left(Pa_Y=Z_i|a^*\right)-P\left(Pa_Y=Z_i|a_t\right)\right),\theta \right\rangle\right] \notag\\
		&= \mathbb{E}\left[\mathbb{1}_{E^c}\sum_{t=1}^{T}\left\langle\sum_{i=1}^{k^n}f(Z_i)\left(P(Pa_Y=Z_i|a^*)-P(Pa_Y=Z_i|a_t)\right),\theta \right\rangle\right] \notag\\
		&+\mathbb{E}\left[\mathbb{1}_E\sum_{t=1}^{T}\left\langle\sum_{i=1}^{k^n}f(Z_i)\left(P(Pa_Y=Z_i|a^*)-P(Pa_Y=Z_i|a_t)\right),\theta \right\rangle\right] \notag\\
		&\leq 2TP(E^c)+\mathbb{E}\left[\mathbb{1}_E\sum_{t=1}^{T}\left\langle\sum_{i=1}^{k^n}f(Z_i)\left(P(Pa_Y=Z_i|a^*)-P(Pa_Y=Z_i|a_t)\right),\theta \right \rangle\right]\notag\\
		& \leq 2+ \mathbb{E}\left[\sum_{t=1}^{T}\mathbb{1}_{E_t}\left\langle\sum_{i=1}^{k^n}f(Z_i)\left(P(Pa_Y=Z_i|a^*)-P(Pa_Y=Z_i|a_t)\right),\theta \right\rangle\right]. \label{equ:Blinear_sec}
	\end{align}
	Again, we know from equation~\ref{equ:TSinsight} such that $P(a^*=\cdot|\mathcal{F}_{t-1}) = P(a_t=\cdot|\mathcal{F}_{t-1})$, where $\mathcal{F}_{t-1} = \sigma(Z_1,a_1,Y_1,\ldots,Z_{t-1},a_{t-1},Y_{t-1})$. Thus we have
	\begin{align*}
		&\mathbb{E}\left[\mathbb{1}_{E_t}\left\langle \sum_{i=1}^{k^n}f(Z_i)\left(P\left(Pa_Y=Z_i|a^*\right)-P\left(Pa_Y=Z_i|a_t\right)\right),\theta\right\rangle\middle|\mathcal{F}_{t-1}\right]\\
		=& \mathbb{1}_{E_t}\mathbb{E}\left[\left\langle \sum_{i=1}^{k^n}f(Z_i)\left(P\left(Pa_Y=Z_i|a^*\right)-P\left(Pa_Y=Z_i|a_t\right)\right),\theta\right\rangle\middle|\mathcal{F}_{t-1}\right]\\
		=& \mathbb{1}_{E_t}\mathbb{E}\left[\left\langle \sum_{i=1}^{k^n}f(Z_i)P\left(Pa_Y=Z_i|a^*\right),\theta\right\rangle - UCB_t(a^*)+UCB_t(a_t)-\left\langle \sum_{i=1}^{k^n}f(Z_i)P(Pa_Y=Z_i|a_t),\theta\right\rangle\middle|\mathcal{F}_{t-1}\right]\\
		\leq & \mathbb{1}_{E_t}\mathbb{E}\left[UCB_t(a_t)-\left\langle \sum_{i=1}^{k^n}f(Z_i)P(Pa_Y=Z_i|a_t),\theta\right\rangle\middle|\mathcal{F}_{t-1}\right]\\
		\leq & \mathbb{1}_{E_t}\mathbb{E}\left[\left\langle \sum_{i=1}^{k^n}f(Z_i)P(Pa_Y=Z_i|a_t),\hat{\theta}_{t-1}-\theta\right\rangle\middle|\mathcal{F}_{t-1}\right]+\beta \left\|\sum_{i=1}^{k^n}f(Z_i)P(Pa_Y=Z_i|a)\right\|_{V_{t-1}^{-1}}\\
		\leq & 2\beta \left\|\sum_{i=1}^{k^n}f(Z_i)P(Pa_Y=Z_i|a)\right\|_{V_{t-1}^{-1}}.
	\end{align*}
	Substituting into the second term of equation~\ref{equ:Blinear_sec},
	\begin{align*}
		&\mathbb{E}\left[\sum_{t=1}^{T}\mathbb{1}_{E_t}\left\langle\sum_{i=1}^{k^n}f(Z_i)\left(P(Pa_Y=Z_i|a^*)-P(Pa_Y=Z_i|a_t)\right),\theta \right\rangle\right] \\
		\leq & 2\mathbb{E}\left[\beta\sum_{t=1}^{T}\left(1\wedge\left\|\sum_{i=1}^{k^n}f(Z_i)P(Pa_Y=Z_i|a)\right\|_{V_{t-1}^{-1}}\right)\right] \\
		\leq & 2\sqrt{T\mathbb{E}\left[\beta^2\sum_{t=1}^{T}\left(1\wedge\left\|\sum_{i=1}^{k^n}f(Z_i)P(Pa_Y=Z_i|a)\right\|_{V_{t-1}^{-1}}^2\right)\right]}\ \text{( By Cauchy-Schwartz)}\\
		\leq & 2\sqrt{2dT\beta^2\log\left(1+\frac{T}{d}\right)} \ \text{(By Lemma~\ref{lemma:vector_1})}.
	\end{align*}
	Putting together we prove
	\begin{align}
		BR_T \leq 2+2\sqrt{2dT\beta^2\log\left(1+\frac{T}{d}\right)} = \tilde{O}\left(d\sqrt{T}\right).
	\end{align}
\end{proof}

\subsection{Proof of Theorem~\ref{thm:CL} (CL-UCB)}
\begin{proof}
	Define $\beta = 1+\sqrt{2\log\left(T\right)+d\log\left(1+\frac{T}{d}\right)}$, by Lemma~\ref{lemma:theta_conv} and above proof for CL-TS we have
	\begin{align*}
		&P(\exists t\leq T:\left\|\hat{\theta}_{t-1}-\theta^*\right\|_{V_{t-1}}\geq \beta)\leq \frac{1}{T},\\
		&P(\exists t\in \mathbb{N}^+:\theta^*\notin \mathcal{C}_t)\leq \frac{1}{T},
	\end{align*}
	where $\mathcal{C}_t = \left\{\theta\in \mathbb{R}^d: \left\|\theta-\hat{\theta}_{t-1}\right\|_{V_{t-1}} \leq \beta \right\}$.
	
	Let $\tilde{\theta}_t$ denote a $\theta$ that satisfies $\langle \tilde{\theta}_t,a_t\rangle = UCB_t(a_t)$. Again let $E_t$ be the event that $\left\|\hat{\theta}_{t-1}-\theta^*\right\|_{V_{t-1}}\leq \beta$, let $E=\bigcap E_t$, $a^* = \argmax_a \sum_{j=1}^{k^n}\langle f(Z_j),\theta \rangle P(Pa_Y=Z_j|a)$. Then on event $E_t$, using the fact that $\theta^* \in \mathcal{C}_t$ we have
	\begin{align*}
		\langle\theta^*,\sum_{j=1}^{k^n}f(Z_j)P(Pa_Y=Z_j|a^*)\rangle \leq UCB_t(a^*) \leq UCB_t(a_t) = \langle \tilde{\theta}_t,\sum_{j=1}^{k^n}f(Z_j)P(Pa_Y=Z_j|a_t)\rangle 
	\end{align*}
	Thus we can bound the difference of expected reward between optimal arm and $a_t$ by
	\begin{align*}
		\mu_{a^*}-\mu_{a_t} &= \langle \theta^*,\sum_{j=1}^{k^n}f(Z_j)P(Pa_Y=Z_j|a^*)\rangle - \langle \theta^*,\sum_{j=1}^{k^n}f(Z_j)P(Pa_Y=Z_j|a_t)\rangle\\
		&\leq \langle \tilde{\theta}_t-\theta^*,\sum_{j=1}^{k^n}f(Z_j)P(Pa_Y=Z_j|a_t)\rangle\\
		&\leq 2 \wedge 2\beta\left\|\sum_{j=1}^{k^n}f(Z_j)P(Pa_Y=Z_j|a_t)\right\|_{V_{t-1}^{-1}}\\
		& \leq 2\beta\left(1\wedge \left\|\sum_{j=1}^{k^n}f(Z_j)P(Pa_Y=Z_j|a_t)\right\|_{V_{t-1}^{-1}}\right).
	\end{align*}
	So the expected regret can be further bounded by:
	\begin{align*}
		\mathbb{E}\left[R_T\right] &= \mathbb{E}\left[\sum_{t=1}^{T}(\mu_{a^*}-\mu_{a_t})\right]= \mathbb{E}\left[\mathbb{1}_E\sum_{t=1}^{T}(\mu_{a^*}-\mu_{a_t})\right]+\mathbb{E}\left[\mathbb{1}_{E^c}\sum_{t=1}^{T}(\mu_{a^*}-\mu_{a_t})\right]\\
		&\leq \mathbb{E}\left[\sum_{t=1}^{T}(\mu_{a^*}-\mu_{a_t})\mathbb{1}_{E_t}\right]+\mathbb{E}\left[\mathbb{1}_{E^c}\sum_{t=1}^{T}(\mu_{a^*}-\mu_{a_t})\right]\\
		&\leq 2\beta\sum_{t=1}^{T}\left(1\wedge \left\|\sum_{j=1}^{k^n}f(Z_j)P(Pa_Y=Z_j|a_t)\right\|_{V_{t-1}^{-1}}\right)+2TP(E^c) \\
		&\leq 2+2\beta\sqrt{T\sum_{t=1}^{T}\left(1\wedge \left\|\sum_{j=1}^{k^n}f(Z_j)P(Pa_Y=Z_j|a_t)\right\|_{V_{t-1}^{-1}}^2\right)}\ \text{(By Cauchy-Schwartz)}\\
		&\leq 2+2\beta \sqrt{2dT\log\left(1+\frac{T}{d}\right)} \ ~\text{(By Lemma~\ref{lemma:vector_1})}
	\end{align*}
\end{proof}

\subsection{Proof of Claim~\ref{claim:unstructured}}
\begin{proof}
	Denote the reward variable for action $a$ by $Y|_a$ and denote the reward variable given fixed parent values by $Y|_{\text{Pa}_Y=\mathbf{Z}}$. According to the causal information, $Y|_a$ can be represented as a weighted sum of $Y|_{\text{Pa}_Y=\mathbf{Z}}$:
	\begin{align}
		Y|_a = \sum_{\mathbf{Z}}P(\text{Pa}_Y=\mathbf{Z}|a)Y|_{\text{Pa}_Y=\mathbf{Z}}.
	\end{align}
	In the statement of claim~\ref{claim:unstructured} we know that $Y|_{\text{Pa}_Y=\mathbf{Z}}$ are independent Gaussian distributions, therefore $Y|_a$, a weighted sum of Gaussian distributions still follows a Gaussian distribution. It remains to show the variance of $Y|_a$ is less than 1.
	\begin{align}
	    \text{Var}(Y|_a) &= \sum_{\mathbf{Z}}P(\text{Pa}_Y=\mathbf{Z}|a)^2\text{Var}(Y|_{\text{Pa}_Y=\mathbf{Z}})\\
	    &\leq \sum_{\mathbf{Z}}P(\text{Pa}_Y=\mathbf{Z}|a)^2\leq \sum_{\mathbf{Z}}P(\text{Pa}_Y=\mathbf{Z}|a) = 1,
	\end{align}
	where the first inequality above uses the condition that $\text{Var}(Y|_{\text{Pa}_Y=\mathbf{Z}})\leq 1$.
	We show that the reward for every arm $Y|_a$ is Gaussian distributed with variance less than 1, thus the bandit environment $\nu'$ described in the claim is an instance in Gaussian bandit environment class.
\end{proof}

\subsection{Proof of Theorem~\ref{thm:UCB_lower}}
We first introduce an important concept.

\begin{defini}[$p$-order Policy]
	For K-arm unstructured Gaussian bandit environments $\mathcal{E} := \mathcal{E}_K(\mathcal{N})$ and policy $\pi$, whose regret, on any $\nu \in \mathcal{E}$, is bounded by $CT^p$ for some $C>0$ and $p>0$. We call this policy class $\Pi(\mathcal{E},C,T,p)$, the class of p-order policies. 
\end{defini}
Note that UCB and TS are in this class with $C = C'_\epsilon \sqrt{K}$ and $p=1/2+\epsilon$ with some $C'_\epsilon>0$ for arbitrary small $\epsilon$.

We use the following result to prove our theorem.
\begin{thm}[Finite-time, instance-dependent regret lower bound for $p$-order policies, Theorem 16.4 in~\citet{lattimore2018bandit}]
	\label{thm:finite_dep_lower}
	Let $\nu\in\mathcal{E}_K(\mathcal{N})$ be a $K$-arm Gaussian bandit with mean vector $\mu\in\mathbb{R}^K$ and suboptimality gaps $\Delta\in[0,\infty)^K$. Let 
	\begin{align*}
		\mathcal{E}(\nu) = \{\nu'\in\mathcal{E}_K(\mathcal{N}):\mu_i(\nu')\in[\mu_i,\mu_i+2\Delta_i]\}.
	\end{align*}
	Suppose $\pi$ is a $p$-order policy such that $\exists C>0$ and $p\in(0,1)$, $R_T(\pi,\nu')\leq CT^p$ for all $T$ and $\nu'\in\mathcal{E}(\nu)$. Then for any $\epsilon\in(0,1]$,
	\begin{align*} 
	\mathbb{E}R_T(\pi,\nu) \geq \frac{2}{(1+\epsilon)^2} \sum_{i:\Delta_i>0}\left(\frac{(1-p)\log(T)+\log(\frac{\epsilon\Delta_i}{8C})}{\Delta_i}\right)^+,
	\end{align*}
	where $(x)^+ = \max(x,0)$ is the positive part of $x \in \mathbb{R}$. 
\end{thm}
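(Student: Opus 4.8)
The plan is to prove this bound arm-by-arm through a change-of-measure argument: for each suboptimal arm $i$ I would construct a single alternative environment $\nu^{(i)}\in\mathcal{E}(\nu)$ in which arm $i$ is made optimal, and then argue that a policy keeping regret small on \emph{both} $\nu$ and $\nu^{(i)}$ cannot tell them apart without pulling arm $i$ frequently under $\nu$. Since $\mathbb{E}R_T(\pi,\nu)\ge\sum_{i:\Delta_i>0}\Delta_i\,\mathbb{E}_\nu[T_i(T)]$, where $T_i(T)$ counts pulls of arm $i$, it suffices to lower bound each $\mathbb{E}_\nu[T_i(T)]$ and sum. The two workhorses are the divergence decomposition lemma and the Bretagnolle--Huber inequality.

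Fix a suboptimal arm $i$ and set $\lambda=(1+\epsilon)\Delta_i$. I would let $\nu^{(i)}$ agree with $\nu$ on every arm except $i$, whose Gaussian mean is shifted from $\mu_i$ to $\mu_i+\lambda$. Because $\epsilon\in(0,1]$ we have $\lambda\le2\Delta_i$, so $\mu_i(\nu^{(i)})\in[\mu_i,\mu_i+2\Delta_i]$ and hence $\nu^{(i)}\in\mathcal{E}(\nu)$; also $\nu\in\mathcal{E}(\nu)$, so the uniform bound $CT^p$ applies to both. In $\nu^{(i)}$ arm $i$ becomes optimal, and the smallest gap among the remaining arms is that of the previously optimal arm, namely $\mu_i+\lambda-(\mu_i+\Delta_i)=\epsilon\Delta_i$. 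The divergence decomposition, together with $D(\mathcal{N}(\mu_i,1)\,\|\,\mathcal{N}(\mu_i+\lambda,1))=\lambda^2/2$ for unit-variance Gaussians, gives
\begin{align*}
D(\mathbb{P}_{\nu,\pi}\,\|\,\mathbb{P}_{\nu^{(i)},\pi}) = \mathbb{E}_\nu[T_i(T)]\,\frac{(1+\epsilon)^2\Delta_i^2}{2}.
\end{align*}

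Next I would apply Bretagnolle--Huber to the event $A=\{T_i(T)\ge T/2\}$, which yields $\mathbb{P}_\nu(A)+\mathbb{P}_{\nu^{(i)}}(A^c)\ge\tfrac12\exp(-D)$. The two probabilities are controlled via the uniform regret bound and Markov's inequality: under $\nu$ arm $i$ has gap $\Delta_i$, so $CT^p\ge\Delta_i\tfrac{T}{2}\mathbb{P}_\nu(A)$, giving $\mathbb{P}_\nu(A)\le 2CT^{p-1}/\Delta_i$; under $\nu^{(i)}$ the event $A^c$ forces at least $T/2$ pulls of arms with gap $\ge\epsilon\Delta_i$, giving $\mathbb{P}_{\nu^{(i)}}(A^c)\le 2CT^{p-1}/(\epsilon\Delta_i)$. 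Using $\epsilon\le1$ these sum to at most $4CT^{p-1}/(\epsilon\Delta_i)$. Inverting the Bretagnolle--Huber bound and taking logarithms then yields
\begin{align*}
\mathbb{E}_\nu[T_i(T)] \ge \frac{2}{(1+\epsilon)^2\Delta_i^2}\left((1-p)\log T + \log\frac{\epsilon\Delta_i}{8C}\right).
\end{align*}
Multiplying by $\Delta_i$, combining with the trivial bound $\mathbb{E}_\nu[T_i(T)]\ge0$ to replace each summand by its positive part, and summing over suboptimal arms produces exactly the claimed inequality.

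The decomposition and Bretagnolle--Huber steps are standard bookkeeping, so the main obstacle is the calibrated choice of the shift $\lambda=(1+\epsilon)\Delta_i$ and the threshold $T/2$ so that simultaneously (i) the alternative $\nu^{(i)}$ stays inside $\mathcal{E}(\nu)$, which is precisely where $\epsilon\le1$ is used; (ii) the residual gap in $\nu^{(i)}$ is correctly identified as $\epsilon\Delta_i$; and (iii) the constants align to give the factor $\epsilon\Delta_i/(8C)$ inside the logarithm. Verifying that the minimum gap in $\nu^{(i)}$ is indeed $\epsilon\Delta_i$, attained by the previously optimal arm with every other arm having a strictly larger gap, is the one place where the structure of the instance rather than pure algebra enters the argument.
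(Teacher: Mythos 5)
Your proof is correct and follows essentially the same route as the paper's source: the paper does not prove this theorem itself but imports it verbatim from \citet{lattimore2018bandit} (Theorem 16.4), whose proof is exactly your argument — shift arm $i$ by $\lambda=(1+\epsilon)\Delta_i$ so that $\nu^{(i)}\in\mathcal{E}(\nu)$, apply the divergence decomposition and Bretagnolle--Huber to $A=\{T_i(T)\ge T/2\}$, control both error probabilities via the uniform bound $CT^p$ and Markov (using $\epsilon\le 1$ to merge them into $4CT^{p-1}/(\epsilon\Delta_i)$, which is where the $8C$ arises), and invert. One cosmetic slip: if $\nu$ has several optimal arms, those arms also have gap exactly $\epsilon\Delta_i$ in $\nu^{(i)}$ rather than strictly larger, but the only fact you use — every arm other than $i$ has gap at least $\epsilon\Delta_i$ under $\nu^{(i)}$ — still holds, so nothing breaks.
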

\begin{proof}[Proof of Theorem~\ref{thm:UCB_lower}]
	Consider the bandit environment $\nu$ described in section~\ref{sec:lower}. By claim~\ref{claim:unstructured} we know $\nu$ is an instance in unstructured Gaussian bandit environment class, so we can further apply Theorem~\ref{thm:finite_dep_lower}. The size of three types of actions are all $3^N/3$. For Type 1 actions, its gap compared to the optimal actions is $\Delta$, for Type 0 actions, gap is $p_1\Delta$.
	Plugging into the results of Theorem~\ref{thm:finite_dep_lower}, for every $p$-order policy over $\mathcal{E}(\nu)$, we have
	\begin{align}\label{equ:lower_bound_middle}
	\mathbb{E}R_T(\pi,\nu) \geq \frac{1}{2} \frac{3^N}{3}\left(\frac{(1-p)\log(T)+\log(\frac{\Delta}{8C})}{\Delta}\right)^+ + \frac{1}{2} \frac{3^N}{3}\left(\frac{(1-p)\log(T)+\log(\frac{p_1\Delta}{8C})}{p_1\Delta}\right)^+.
	\end{align}
	In particular, choose $\Delta = 8\rho C T^{p-1}$, we get
	\begin{align*}
	(1-p)\log(T)+\log(\frac{\Delta}{8C}) &= \log(\rho),\\
	(1-p)\log(T)+\log(\frac{p_1\Delta}{8C}) &= \log(p_1\rho).
	\end{align*}
	Note that $\sup_{\rho >0} \log(\rho)/\rho = \exp(-1) \approx 0.35$, and we next plug above two equations in Equation~\ref{equ:lower_bound_middle} to get
	\begin{align*}
	\mathbb{E}R_T(\pi,\nu) \geq \frac{3^N}{3} \frac{0.35}{8CT^{p-1}}.
	\end{align*}
	Now consider $\pi$ to be UCB, by plugging in $C = C'_\epsilon \sqrt{3^N}$ and $p = 1/2+\epsilon$ we have
	\begin{align*}
	\mathbb{E}R_T(UCB,\nu) \geq \frac{0.35}{24C'_\epsilon} \sqrt{3^N} T^{1/2-\epsilon}.
	\end{align*}
\end{proof}

\section{Probability Tables Used in Experiments}\label{sec:app_table}
\begin{table}[h]
	\centering
	\begin{tabular}{c c c c}
		\hline
		    $i$       & $1$& $2$ & $3$\\
		\hline
		$P(X_1=i)$ & 0.3 & 0.4 & 0.3\\
		$P(X_2=i)$ & 0.3 & 0.3 & 0.4\\
		$P(X_3=i)$ & 0.5 & 0.3 & 0.2\\
		$P(X_4=i)$ & 0.25 & 0.25 & 0.5\\
		$P(W_1=1|X_1=i)$ & 0.2 & 0.5 & 0.8\\
		$P(W_2=1|X_2=i)$ & 0.3 & 0.2 & 0.8\\
		$P(W_3=1|X_3=i)$ & 0.4 & 0.6 & 0.5\\
		$P(W_4=1|X_4=i)$ & 0.3 & 0.5 & 0.6\\
		\hline
	\end{tabular}
\caption{Marginal and conditional probabilities for pure simulation experiment in section~\ref{sec:pure_simulation_fix}, numbers are randomly selected.}
\label{table:psim_dist}
\end{table}
\begin{table}[h]
	\centering
	\begin{tabular}{c c c c c}
		\hline
		$i$ & $1$ & $2$ & $3$ & $4$\\
		\hline
		$P(X_1=i)$ & 0.2 & 0.2 & 0.6 & \\
		$P(X_2=i)$ & 0.05 & 0.6 & 0.3 & 0.05\\
		$P(Z_3=i)$ & 0.5 & 0.2 & 0.3 & \\
		$P(Z_1=1|X_2=i)$ & 0.7 & 0.7 & 0.3 & 0.3\\
		$P(Z_2=1|X_1=3,X_2=i)$ & 0.6 & 0.7 & 0.6 & 0.5\\
		$P(Z_2=1|X_1\neq 3,X_2=i)$ & 0.8 & 0.9 & 0.5 & 0.2\\
		\hline
	\end{tabular}
\caption{Marginal and conditional probabilities for email campaign causal graph.}
\label{table:email_dist}
\end{table}

\end{document}